\documentclass
[
    twoside,                 
    openright,               
    cleardoublepage = empty, 
    fontsize = 12 pt,        
    american,                
    captions = tableheading, 
    numbers = noenddot,      
    footheight = 35 pt,      
]
{scrbook}

\newif\ifprintVersion   
\newif\ifprofessionalPrint 
\newif\iffancyTheorems  
\newif\ifboldNumberSets 
\newif\ifbachelorThesis 

\printVersionfalse
\professionalPrintfalse
\fancyTheoremstrue
\boldNumberSetstrue
\bachelorThesistrue

\newcommand*{\printTitle}{}
\newcommand*{\printGermanTitle}{}
\newcommand*{\myTitle}[2]{\renewcommand*{\printTitle}{#1}\renewcommand*{\printGermanTitle}{#2}}
\newcommand*{\printTitleBold}{\textbf{\printTitle}}

\newcommand*{\printAuthor}{}
\newcommand*{\myName}[1]{\renewcommand*{\printAuthor}{#1}}

\newcommand*{\printProgram}{}
\newcommand*{\myProgram}[1]{\renewcommand*{\printProgram}{#1}}

\newcommand*{\printDateReceived}{}
\newcommand*{\dateOfHandingIn}[1]{\renewcommand*{\printDateReceived}{#1}}

\newcommand*{\printSubject}{}
\newcommand*{\mySubject}[1]{\renewcommand*{\printSubject}{#1}}

\newcommand*{\printKeywords}{}
\newcommand*{\myKeywords}[1]{\renewcommand*{\printKeywords}{#1}}

\newcommand*{\printNameOfSupervisor}{}
\newcommand*{\nameOfMySupervisor}[1]{\renewcommand*{\printNameOfSupervisor}{#1}}

\newcommand*{\printAdditionalExaminers}{}
\newcommand*{\additionalExaminers}[1]{\renewcommand*{\printAdditionalExaminers}{#1}}

\newlength{\extraborderlength}
\newcommand*{\extraBorder}[1]{\setlength{\extraborderlength}{#1}}

\newlength{\mybindingcorrection}
\newcommand*{\bindingCorrection}[1]{\setlength{\mybindingcorrection}{#1}} 

    \extraBorder{3 mm}

    \bindingCorrection{6 mm}

    \myTitle{Local Search on Vertex Coloring for Bipartite Graphs}{Lokale Suche zur Knotenfärbung auf Bipartiten Graphen}

    \myName{Johanna Gasse}

    \myProgram{IT-Systems Engineering}

    \dateOfHandingIn{1. September 2025}

    \nameOfMySupervisor{Prof.\,Dr. Tobias Friedrich}

    \additionalExaminers{Dr. Timo Kötzing\newline Aishwarya Radhakrishnan}

    \mySubject{A cool bachelor/master thesis.}

    \myKeywords{bachelor thesis | evolutionary algorithms | random local search | gray-box}

\usepackage[utf8]{inputenc} 
\usepackage[T1]{fontenc}    
\usepackage
[
    ngerman,         
    main = american, 
]
{babel}                     

\input glyphtounicode
\usepackage{calc} 

\newlength{\myparindent}
\newlength{\myparskip}
\setfootnoterule{0 cm}

\deffootnote[1.2 em]{1.2 em}{0 em}{\makebox[1.4 em][l]{\textbf{\thefootnotemark}}}

\makeatletter%
    \@removefromreset{footnote}{chapter}%
\makeatother

\usepackage[dvipsnames]{xcolor} 

\definecolor{stroke1}{HTML}{2574A9} 

\colorlet{captionlabel}{black}
\colorlet{footerpagenr}{black}
\colorlet{footerchapter}{stroke1}
\colorlet{footerchaptername}{black}
\colorlet{footersection}{stroke1}
\colorlet{footersectionname}{black}
\colorlet{chapternumber}{stroke1}

\newlength{\mypaperwidth}
\newlength{\mypaperheight}
\newlength{\mybodywidth}
\newlength{\mybodyheight}
\newlength{\myoutermargin}
\ifprintVersion
    \ifprofessionalPrint
    \else
    \fi
\else
\fi

\newlength{\mytopmargin}
\ifprintVersion
    \ifprofessionalPrint
    \fi
\fi

\newlength{\myinnermargin}
\ifprintVersion
    \ifprofessionalPrint
    \fi
\fi

\newlength{\mybottommargin}
\ifprintVersion
    \ifprofessionalPrint
    \fi
\fi

\newcommand{\goldenratio}{1.618}

\newlength{\myheadsep} 
\ifprintVersion
    \ifprofessionalPrint
    \fi
\fi

\newlength{\myfootskip} 
\ifprintVersion
    \ifprofessionalPrint
    \fi
\fi

\newlength{\mymargininnersep} 
\newlength{\mymarginoutersep} 
\ifprintVersion
    \ifprofessionalPrint
    \fi
\fi

\newlength{\mymarginwidth} 
\newlength{\mymarginwidthwithinnersep} 
\usepackage
[
    \ifprintVersion
        \ifprofessionalPrint
            paperwidth = \mypaperwidth + 2\extraborderlength + \mybindingcorrection,
            paperheight = \mypaperheight + 2\extraborderlength,
        \else
            paperwidth = \mypaperwidth,
            paperheight = \mypaperheight,
        \fi
    \else
        paperwidth = \mypaperwidth,
        paperheight = \mypaperheight,
    \fi
    textwidth = \mybodywidth,
    textheight = \mybodyheight,
    outer = \myoutermargin,
    top = \mytopmargin,
    headsep = \myheadsep,
    footskip = \myfootskip,
    marginparsep = \mymargininnersep,
    marginparwidth = \mymarginwidth,
]
{geometry} 

\usepackage
[
]
{scrlayer-scrpage} 

\clearpairofpagestyles

\KOMAoptions
{%
    headwidth = \textwidth + \mymarginwidthwithinnersep,%
    footwidth = \myoutermargin : \textwidth,%
}

\automark[chapter]{chapter}
\automark*[section]{}

\lehead%
{%
    \begin{minipage}[b]{\mymarginwidth}%
        \small\raggedleft\normalfont\textsf{\textbf{\color{footerchapter}\chaptername\ \thechapter}}
    \end{minipage}
}
\cehead{\hspace*{\mymarginwidthwithinnersep}\parbox{\textwidth}{\raggedright\leftmark}}

\rohead%
{%
    \Ifstr{\rightmark}{\leftmark}%
    {%
        \begin{minipage}[b]{\mymarginwidth}%
            \small\raggedright\normalfont\textsf{\textbf{\color{footersection}Chapter\ \thechapter}}%
        \end{minipage}%
    }%
    {%
        \begin{minipage}[b]{\mymarginwidth}%
            \small\raggedright\normalfont\textsf{\textbf{\color{footersection}Section\ \thesection}}%
        \end{minipage}%
    }%
}
\cohead{\hspace*{-\mymarginwidthwithinnersep}\parbox{\textwidth}{\raggedleft\rightmark}}

\lefoot*%
{%
    \vspace*{1 ex}%
    {\color{stroke1}\rule{\myoutermargin - \mymargininnersep}{0.5 mm}}\\
    \begin{minipage}[b]{\myoutermargin - \mymargininnersep}%
        \raggedleft\normalfont\color{footerpagenr}\textbf{\thepage}%
    \end{minipage}%
}
\rofoot*%
{%
    {\color{stroke1}\rule{\myoutermargin - \mymargininnersep}{0.5 mm}}\\
    \begin{minipage}[b]{\myoutermargin - \mymargininnersep}%
        \raggedright\normalfont\color{footerpagenr}\textbf{\thepage}%
    \end{minipage}%
}

\usepackage{caption}
\usepackage{tikz} 

\newlength{\mytmpa}
\newlength{\mytmpb}

\renewcommand*{\partlineswithprefixformat}[3]%
{%
    #2
    \thispagestyle{empty}
    \setlength{\mytmpa}{0.618\mypaperwidth}%
    \setlength{\mytmpb}{0.382\mypaperheight}%
    \ifprintVersion
        \ifprofessionalPrint
            \setlength{\mytmpa}{0.618\mypaperwidth + \mybindingcorrection + \extraborderlength}%
            \setlength{\mytmpb}{0.382\mypaperheight + \extraborderlength}%
        \fi
    \fi
    \begin{tikzpicture}[overlay, remember picture]%
        \node [inner sep = 0, outer sep = 0, anchor = north] at (current page.north west)%
        {%
            \begin{tikzpicture}[overlay, remember picture]%
            \draw[color = stroke1, line width = 0.7 mm] (\mytmpa, 0) -- (\mytmpa, -\mytmpb);%
            \end{tikzpicture}%
        };%
        \node (align) [align = right, below = \mytmpb - 2 ex, inner sep = 0, outer sep = 0, anchor = north west] at (current page.north west)%
        {%
            \hspace{\mytmpa}\hspace{0.5 em}\partname\ \thepart\\[1 ex]
            \color{stroke1}#3%
        };%
    \end{tikzpicture}%
}
\RedeclareSectionCommand%
[%
    font = \normalfont\Huge\sffamily,
    prefixfont = \normalfont\Huge\sffamily,
]
{part}

\usepackage{etoolbox}

\newbool{chapterHasANumber}
\newbool{chapterHasAStar}
\renewcommand*{\chapterlinesformat}[3]%
{%
    \Ifnumbered{#1}{\setbool{chapterHasANumber}{true}}{\setbool{chapterHasANumber}{false}}%
    \Ifstr{#2}{}{\setbool{chapterHasAStar}{true}}{\setbool{chapterHasAStar}{false}}%
    \ifboolexpr{bool{chapterHasANumber} and not bool{chapterHasAStar}}%
    {%
        \begin{tikzpicture}[overlay, remember picture]%
            \node [right = \myinnermargin, below = \mytopmargin, inner sep = 0, outer sep = 0, anchor = north west] (numbernode) at (current page.north west)%
            {%
                \hspace{\myinnermargin}%
                \sffamily\fontsize{60}{60}\selectfont%
                \color{chapternumber}%
                \thechapter%
            };%
            \node [inner sep = 0, outer sep = 0, anchor = north west] at (numbernode.south west)%
            {%
                \begin{tikzpicture}[overlay, remember picture]%
                    \draw[color = stroke1, line width = 0.7 mm] (\myinnermargin, -1 ex) -- (\paperwidth, -1 ex);%
                \end{tikzpicture}%
            };%
            \node (align) [text width = \textwidth - 2 cm, align = right, right = \myinnermargin + \mybodywidth, inner sep = 0, outer sep = 0, anchor = east] at (numbernode.west)%
            {%
                #3%
            };%
        \end{tikzpicture}%
    }%
    {%
        \begin{tikzpicture}[overlay, remember picture]%
            \node [right = \myinnermargin, below = \mytopmargin, inner sep = 0, outer sep = 0, anchor = north west] (numbernode) at (current page.north west)%
            {%
                \hspace{\myinnermargin}%
                \sffamily\fontsize{60}{60}\selectfont%
                \color{white}%
                \thechapter%
            };%
            \node [inner sep = 0, outer sep = 0, anchor = north west] at (numbernode.south west)%
            {%
                \begin{tikzpicture}[overlay, remember picture]%
                    \draw[color = stroke1, line width = 0.7 mm] (\myinnermargin, -1 ex) -- (\paperwidth, -1 ex);%
                \end{tikzpicture}%
            };%
            \node (align) [align = left, right = \myinnermargin, inner sep = 0, outer sep = 0, anchor = south west] at (numbernode.south west)%
            {%
                #3%
            };%
        \end{tikzpicture}%
    }%
}
\RedeclareSectionCommand%
[%
    font = \color{stroke1}\normalfont\huge\sffamily,
    afterskip = 20 pt,
]
{chapter}

\BeforeStartingTOC[toc]{\pagestyle{plain}}
\AfterStartingTOC{\thispagestyle{plain}}                        
\usepackage
[
    sortcites,              
    style = alphabetic,     
    defernumbers,           
    safeinputenc,           
    backref = true,         
    backrefstyle = three,   
    hyperref = true,        
    maxbibnames = 99,       
    maxcitenames = 2,       
]
{biblatex} 

\renewbibmacro{in:}%
{%
    \ifentrytype{article}{}{\printtext{\bibstring{in}\intitlepunct}}%
}

\renewbibmacro*{volume+number+eid}%
{%
    \printfield{volume}%
    \iffieldundef{number}{}{\addcolon}%
    \printfield{number}%
    \setunit*{\addcomma\space}%
    \printfield{eid}%
}

\DefineBibliographyStrings{english}%
{%
    backrefpage  = {\lowercase{s}ee page}, 
    backrefpages = {\lowercase{s}ee pages} 
}

\DeclareFieldFormat[article]{title}{\textbf{\color{stroke1}#1}}
\DeclareFieldFormat[inproceedings]{title}{\textbf{\color{stroke1}#1}}
\DeclareFieldFormat[thesis]{title}{\textbf{\color{stroke1}#1}}
\DeclareFieldFormat[book]{title}{\textbf{\color{stroke1}#1}}
\DeclareFieldFormat[unpublished]{title}{\textbf{\color{stroke1}#1}}
\DeclareFieldFormat[report]{title}{\textbf{\color{stroke1}#1}}
\DeclareFieldFormat[inbook]{chapter}{\textbf{\color{stroke1}#1}}
\DeclareFieldFormat[inbook]{title}{#1}
\DeclareFieldFormat{pages}{#1}

\newtoggle{authorend}
\togglefalse{authorend}

\DeclareBibliographyDriver{article}%
{%
  \usebibmacro{bibindex}%
  \usebibmacro{begentry}%
  \iftoggle{authorend}{}{\usebibmacro{author/translator+others}}%
  \setunit{\labelnamepunct}\newblock
  \usebibmacro{title}%
  \newunit
  \printlist{language}%
  \newunit\newblock
  \usebibmacro{byauthor}%
  \newunit\newblock
  \usebibmacro{bytranslator+others}%
  \newunit\newblock
  \printfield{version}%
  \newunit\newblock
  \usebibmacro{in:}%
  \usebibmacro{journal+issuetitle}%
  \newunit
  \usebibmacro{byeditor+others}%
  \newunit
  \usebibmacro{note+pages}%
  \newunit\newblock
  \iftoggle{bbx:isbn}
  {\printfield{issn}}
  {}%
  \newunit\newblock
  \usebibmacro{doi+eprint+url}%
  \newunit\newblock
  \usebibmacro{addendum+pubstate}%
  \setunit{\bibpagerefpunct}\newblock
  \usebibmacro{pageref}%
  \newunit\newblock
  \iftoggle{bbx:related}
  {\usebibmacro{related:init}%
    \usebibmacro{related}}
  {}%
  \usebibmacro{finentry}%
  \iftoggle{authorend}{\usebibmacro{author/translator+others}}{}%
}

\DeclareBibliographyDriver{inbook}%
{%
  \usebibmacro{bibindex}%
  \usebibmacro{begentry}%
  \iftoggle{authorend}{}{\usebibmacro{author/translator+others}}%
  \setunit{\labelnamepunct}\newblock
  \usebibmacro{chapter+pages}%
  \newunit
  \printlist{language}%
  \newunit\newblock
  \usebibmacro{byauthor}%
  \newunit\newblock
  \usebibmacro{in:}%
  \usebibmacro{bybookauthor}%
  \newunit\newblock
  \usebibmacro{maintitle+booktitle}%
  \newunit\newblock
  \usebibmacro{byeditor+others}%
  \newunit\newblock
  \printfield{edition}%
  \newunit
  \iffieldundef{maintitle}
  {\printfield{volume}%
    \printfield{part}}
  {}%
  \newunit
  \printfield{volumes}%
  \newunit\newblock
  \usebibmacro{series+number}%
  \newunit\newblock
  \printfield{note}%
  \newunit\newblock
  \usebibmacro{publisher+location+date}%
  \newunit\newblock
  \newunit\newblock
  \iftoggle{bbx:isbn}
  {\printfield{isbn}}
  {}%
  \newunit\newblock
  \usebibmacro{doi+eprint+url}%
  \newunit\newblock
  \usebibmacro{addendum+pubstate}%
  \setunit{\bibpagerefpunct}\newblock
  \usebibmacro{pageref}%
  \newunit\newblock
  \iftoggle{bbx:related}
  {\usebibmacro{related:init}%
    \usebibmacro{related}}
  {}%
  \usebibmacro{finentry}%
  \iftoggle{authorend}{\usebibmacro{author/translator+others}}{}%
}

\DeclareBibliographyDriver{inproceedings}%
{%
  \usebibmacro{bibindex}%
  \usebibmacro{begentry}%
  \iftoggle{authorend}{}{\usebibmacro{author/translator+others}}%
  \setunit{\labelnamepunct}\newblock
  \usebibmacro{title}%
  \newunit
  \printlist{language}%
  \newunit\newblock
  \usebibmacro{byauthor}%
  \newunit\newblock
  \usebibmacro{in:}%
  \usebibmacro{maintitle+booktitle}%
  \newunit\newblock
  \usebibmacro{event+venue+date}%
  \newunit\newblock
  \usebibmacro{byeditor+others}%
  \newunit\newblock
  \iffieldundef{maintitle}
  {\printfield{volume}%
    \printfield{part}}
  {}%
  \newunit
  \printfield{volumes}%
  \newunit\newblock
  \usebibmacro{series+number}%
  \newunit\newblock
  \printfield{note}%
  \newunit\newblock
  \printlist{organization}%
  \newunit
  \usebibmacro{publisher+location+date}%
  \newunit\newblock
  \usebibmacro{chapter+pages}%
  \newunit\newblock
  \iftoggle{bbx:isbn}
  {\printfield{isbn}}
  {}%
  \newunit\newblock
  \usebibmacro{doi+eprint+url}%
  \newunit\newblock
  \usebibmacro{addendum+pubstate}%
  \setunit{\bibpagerefpunct}\newblock
  \usebibmacro{pageref}%
  \newunit\newblock
  \iftoggle{bbx:related}
  {\usebibmacro{related:init}%
    \usebibmacro{related}}
  {}%
  \usebibmacro{finentry}%
  \iftoggle{authorend}{\usebibmacro{author/translator+others}}{}%
}

\DeclareBibliographyDriver{thesis}%
{%
  \usebibmacro{bibindex}%
  \usebibmacro{begentry}%
  \iftoggle{authorend}{}{\usebibmacro{author}}%
  \setunit{\labelnamepunct}\newblock
  \usebibmacro{title}%
  \newunit
  \printlist{language}%
  \newunit\newblock
  \usebibmacro{byauthor}%
  \newunit\newblock
  \printfield{note}%
  \newunit\newblock
  \printfield{type}%
  \newunit
  \usebibmacro{institution+location+date}%
  \newunit\newblock
  \usebibmacro{chapter+pages}%
  \newunit
  \printfield{pagetotal}%
  \newunit\newblock
  \iftoggle{bbx:isbn}
  {\printfield{isbn}}
  {}%
  \newunit\newblock
  \usebibmacro{doi+eprint+url}%
  \newunit\newblock
  \usebibmacro{addendum+pubstate}%
  \setunit{\bibpagerefpunct}\newblock
  \usebibmacro{pageref}%
  \newunit\newblock
  \iftoggle{bbx:related}
  {\usebibmacro{related:init}%
    \usebibmacro{related}}
  {}%
  \usebibmacro{finentry}%
  \iftoggle{authorend}{\usebibmacro{author}}{}%
}

\providebool{bbx:subentry}
\newbibmacro*{citenum}%
{
  \printtext[bibhyperref]{
    \printfield{prefixnumber}%
    \printfield{labelnumber}%
    \ifbool{bbx:subentry}
    {\printfield{entrysetcount}}
    {}}%
}

\DeclareCiteCommand{\conline}[\mkbibbrackets]
{\usebibmacro{prenote}}
{\usebibmacro{citeindex}%
  \usebibmacro{citenum}}
{\multicitedelim}
{\usebibmacro{postnote}}       
\usepackage
[
    babel = true, 
]
{microtype}           
\usepackage{csquotes} 

\usepackage{amsmath}
\usepackage{amssymb}
\usepackage{amsthm}
\usepackage{thmtools}
\usepackage{mathtools}
\usepackage{thm-restate}
\usepackage{dsfont}        
\usepackage{braceMnSymbol} 

\usepackage
[
    ttscale = 0.85, 
]
{libertine} 
\usepackage
[
    libertine,    
    slantedGreek, 
    vvarbb,       
    libaltvw,     
]
{newtxmath} 
\usepackage{url} 
\usepackage{bm}  

\usepackage{graphicx} 
\usepackage
[
    subrefformat = simple, 
    labelformat = simple,  
]
{subcaption}         
\usepackage{wrapfig} 

\columnsep = \mymargininnersep

\usepackage{array}     
\usepackage{booktabs}  
\usepackage{longtable} 
\usepackage{pdflscape} 

\usepackage{enumitem} 

\usepackage
[
    ruled,         
    vlined,        
    linesnumbered, 
]
{algorithm2e} 

\usepackage{xspace}   
\usepackage
[
    shortcuts, 
]
{extdash}             
\usepackage{setspace} 

\usepackage{xparse}    
\usepackage{footnote}  
\usepackage{afterpage} 
\usepackage
[
    textsize = scriptsize, 
]
{todonotes}            

\usepackage{tikz}
\usetikzlibrary{positioning}

\usepackage
[
    bookmarks = true,                 
    bookmarksopen = false,            
    bookmarksnumbered = true,         
    pdfstartpage = 1,                 
    pdftitle = {{\printTitle}},       
    pdfauthor = {{\printAuthor}},     
    pdfsubject = {{\printSubject}},   
    pdfkeywords = {{\printKeywords}}, 
    breaklinks = true,                
    \ifprintVersion
        hidelinks,                    
    \else
    colorlinks = true,            
    allcolors = stroke1,          
    \fi
]
{hyperref} 

\usepackage
[
    noabbrev,   
    nameinlink, 
]
{cleveref} 
\newcommand*{\colloquialDegreeName}{Master}
\newcommand*{\colloquialDegreeNameLowercase}{master}

\newcommand*{\degreeAbbreviation}{M.}

\ifbachelorThesis
    \renewcommand*{\colloquialDegreeName}{Bachelor}
    \renewcommand*{\colloquialDegreeNameLowercase}{bachelor}
    \renewcommand*{\degreeAbbreviation}{B.}
\fi

\makeatletter
    \def\IfEmptyTF#1%
    {%
        \if\relax\detokenize{#1}\relax%
            \expandafter\@firstoftwo%
        \else%
            \expandafter\@secondoftwo%
        \fi%
    }
\makeatother

\NewDocumentCommand{\mathOrText}{m}
{%
    \ensuremath{#1}\xspace%
}

\let\originalleft\left
\let\originalright\right
\renewcommand{\left}{\mathopen{}\mathclose\bgroup\originalleft}
\renewcommand{\right}{\aftergroup\egroup\originalright}

\makeatletter
    \DeclareRobustCommand{\bfseries}%
    {%
        \not@math@alphabet\bfseries\mathbf%
        \fontseries\bfdefault\selectfont%
        \boldmath%
    }
\makeatother

\xspaceaddexceptions{]\}}

\allowdisplaybreaks

\crefname{ineq}{inequality}{inequalities}
\creflabelformat{ineq}{#2{\upshape(#1)}#3} 

\crefname{term}{term}{terms}
\creflabelformat{term}{#2{\upshape(#1)}#3}

\let\oldfootnote\footnote

\newlength{\spaceBeforeFootnote} 
\newlength{\spaceAfterFootnote}  

\RenewDocumentCommand{\footnote}{o o o m}%
{%
    \IfNoValueTF{#1}%
    {%
        \oldfootnote{#4}%
    }%
    {%
        \setlength{\spaceBeforeFootnote}{\IfEmptyTF{#1}{0}{#1} em}%
        \IfNoValueTF{#2}%
        {%
            \hspace*{\spaceBeforeFootnote}\oldfootnote{#4}%
        }%
        {%
            \setlength{\spaceAfterFootnote}{\IfEmptyTF{#2}{0}{#2} em}%
            \hspace*{\spaceBeforeFootnote}\IfNoValueTF{#3}{\oldfootnote{#4}}{\oldfootnote[#3]{#4}}\hspace*{\spaceAfterFootnote}%
        }%
    }%
}

\makesavenoteenv{figure}
\makesavenoteenv{table}
\makesavenoteenv{tabular}

\iffancyTheorems
    \declaretheoremstyle
    [
        spaceabove = \topsep,
        spacebelow = \topsep,
        headfont = \bfseries,
        headformat = \textcolor{stroke1}{$\blacktriangleright$} \NAME~\NUMBER \NOTE,
        notefont = \bfseries,
        notebraces = {(}{)},
        bodyfont = \normalfont,
        postheadspace = 0.5 em,
        qed = \textcolor{stroke1}{\bfseries$\blacktriangleleft$},
    ]
    {myTheoremStyle}
    
    \declaretheorem
    [
        style = myTheoremStyle,
        name = Conjecture,
        numberwithin = chapter,
    ]
    {conjecture}
    \declaretheorem
    [
        style = myTheoremStyle,
        name = Proposition,
        sharenumber = conjecture,
    ]
    {proposition}
    \declaretheorem
    [
        style = myTheoremStyle,
        name = Claim,
        sharenumber = conjecture,
    ]
    {claim}
    \declaretheorem
    [
        style = myTheoremStyle,
        name = Lemma,
        sharenumber = conjecture,
    ]
    {lemma}
    \declaretheorem
    [
        style = myTheoremStyle,
        name = Corollary,
        sharenumber = conjecture,
    ]
    {corollary}
    \declaretheorem
    [
        style = myTheoremStyle,
        name = Theorem,
        sharenumber = conjecture,
    ]
    {theorem}
    \declaretheorem
    [
        style = myTheoremStyle,
        name = Definition,
        sharenumber = conjecture,
    ]
    {definition}
    \declaretheorem
    [
        style = myTheoremStyle,
        name = Example,
        sharenumber = conjecture,
    ]
    {example}
    \declaretheorem
    [
        style = myTheoremStyle,
        name = Remark,
        sharenumber = conjecture,
    ]
    {remark}
\else
    \theoremstyle{plain}
    
    \newtheorem{conjecture}{Conjecture}[chapter]

    \newtheorem{lemma}[conjecture]{Lemma}
    
    \newtheorem{theorem}[conjecture]{Theorem}
    \newtheorem{definition}[conjecture]{Definition}

\fi

\NewDocumentCommand{\functionTemplate}{m m m m o}%
{%
    \IfNoValueTF{#5}%
    {%
        \mathOrText{#1\left#2{#4}\right#3}%
    }%
    {%
        \mathOrText{#1#5#2{#4}#5#3}%
    }%
}

\newcommand*{\leftBracketType}{(}
\newcommand*{\rightBracketType}{)}

\NewDocumentCommand{\createFunction}{m m o o}%
{%
    \renewcommand*{\leftBracketType}{\IfNoValueTF{#3}{(}{#3}}%
    \renewcommand*{\rightBracketType}{\IfNoValueTF{#4}{)}{#4}}%
    \NewDocumentCommand{#1}{o o}%
    {%
        \IfNoValueTF{##1}%
        {%
            \mathOrText{#2}%
        }%
        {%
            \functionTemplate{#2}{\leftBracketType}{\rightBracketType}{##1}[##2]%
        }%
    }%
}

\DeclareDocumentCommand{\probabilisticFunctionTemplate}{m m O{} o}
{%
    \functionTemplate{#1}%
    {\lbrack}%
    {\rbrack}%
    {#2\IfEmptyTF{#3}{}{\ \IfNoValueTF{#4}{\left}{#4}\vert\ \vphantom{#2}#3\IfNoValueTF{#4}{\right.}{}}}%
    [#4]%
}

\ifboldNumberSets

    \newcommand*{\indicatorFunctionSymbol}{\mathbf{1}}
\else

    \newcommand*{\indicatorFunctionSymbol}{\mathds{1}}
\fi

\RenewDocumentCommand{\Pr}{m O{} o}%
{%
    \probabilisticFunctionTemplate{\mathrm{Pr}}{#1}[#2][#3]%
}

\NewDocumentCommand{\E}{m O{} o}%
{%
    \probabilisticFunctionTemplate{\mathrm{E}}{#1}[#2][#3]%
}

\NewDocumentCommand{\Var}{m O{} o}%
{%
    \probabilisticFunctionTemplate{\mathrm{Var}}{#1}[#2][#3]%
}

\DeclareDocumentCommand{\bigO}{m o}%
{%
    \functionTemplate{\mathrm{O}}{(}{)}{#1}[#2]%
}

\DeclareDocumentCommand{\smallO}{m o}%
{%
    \functionTemplate{\mathrm{o}}{(}{)}{#1}[#2]%
}

\DeclareDocumentCommand{\bigTheta}{m o}%
{%
    \functionTemplate{\upTheta}{(}{)}{#1}[#2]%
}

\DeclareDocumentCommand{\bigOmega}{m o}%
{%
    \functionTemplate{\upOmega}{(}{)}{#1}[#2]%
}

\DeclareDocumentCommand{\smallOmega}{m o}%
{%
    \functionTemplate{\upomega}{(}{)}{#1}[#2]%
}

\DeclareDocumentCommand{\eulerE}{o}%
{%
    \mathOrText{\mathrm{e}\IfNoValueTF{#1}{}{^{#1}}}%
}

\DeclareDocumentCommand{\poly}{m o}%
{%
    \functionTemplate{\mathrm{poly}}{(}{)}{#1}[#2]%
}

\createFunction{\id}{\mathrm{id}}

\NewDocumentCommand{\ind}{m o o}%
{%
    \IfNoValueTF{#2}%
    {%
        \mathOrText{\indicatorFunctionSymbol_{#1}}%
    }%
    {%
        \functionTemplate{\indicatorFunctionSymbol_{#1}}{(}{)}{#2}[#3]%
    }%
}

\DeclareDocumentCommand{\dom}{m o}%
{%
    \functionTemplate{\mathrm{dom}}{(}{)}{#1}[#2]%
}

\DeclareDocumentCommand{\rng}{m o}%
{%
    \functionTemplate{\mathrm{rng}}{(}{)}{#1}[#2]%
}

\DeclareDocumentCommand{\d}{o}%
{%
    \mathrm{d}\IfNoValueTF{#1}{}{^{#1}}%
}

\DeclareDocumentCommand{\set}{m m o}%
{
    \mathOrText{\IfNoValueTF{#3}{\left}{#3}\{#1\ \IfNoValueTF{#3}{\left}{#3}\vert\
    \vphantom{#1}#2\IfNoValueTF{#3}{\right.}{}\IfNoValueTF{#3}{\right}{#3}\}}
}      

\DeclareDocumentCommand{\set}{m g o}{
    \ensuremath{
        \IfNoValueTF{#3}{\left}{#3}\{#1
            \IfNoValueTF{#2}{}{
                \ \IfNoValueTF{#3}{\left}{#3}\vert\ \vphantom{#1}#2\IfNoValueTF{#3}{\right.}{}
            } \IfNoValueTF{#3}{\right}{#3}\}
    }\xspace
}

\definecolor{lightgreen}{rgb}{0.75,0.92,0.61}

\newcommand{\labelname}[1]{
  \def\@currentlabelname{#1}}

\definecolor{mygreen}{RGB}{1, 150, 122}

\definecolor{flora}{RGB}{136, 57, 239}

\providecommand{\ignore}[1]{} 

\newcommand{\ooea}{(1+1) EA\xspace}

\newcommand{\col}{\mathrm{col}\xspace}

\newcommand{\Prob}[1]{\mathbb{P}\left[ #1 \right]}
\newcommand{\Ex}[1]{\mathbb{E}\left[ #1 \right]}

\DeclareMathOperator{\Geo}{Geo}

	{
	\begin{center}
	\begin{algorithm2e}
	}%
	{
	\end{algorithm2e}
	\end{center}
	}
	
\allowdisplaybreaks

\begin{document}

    \frontmatter

\ifprintVersion
    \ifprofessionalPrint
        \newgeometry
        {
            textwidth = 134 mm,
            textheight = 220 mm,
            top = 38 mm + \extraborderlength,
            inner = 38 mm + \mybindingcorrection + \extraborderlength,
        }
    \else
        \newgeometry
        {
            textwidth = 134 mm,
            textheight = 220 mm,
            top = 38 mm,
            inner = 38 mm + \mybindingcorrection,
        }
    \fi
\else
    \newgeometry
    {
        textwidth = 134 mm,
        textheight = 220 mm,
        top = 38 mm,
        inner = 38 mm,
    }
\fi

\begin{titlepage}
    \sffamily
    \begin{center}
        \includegraphics[height = 3.2 cm]{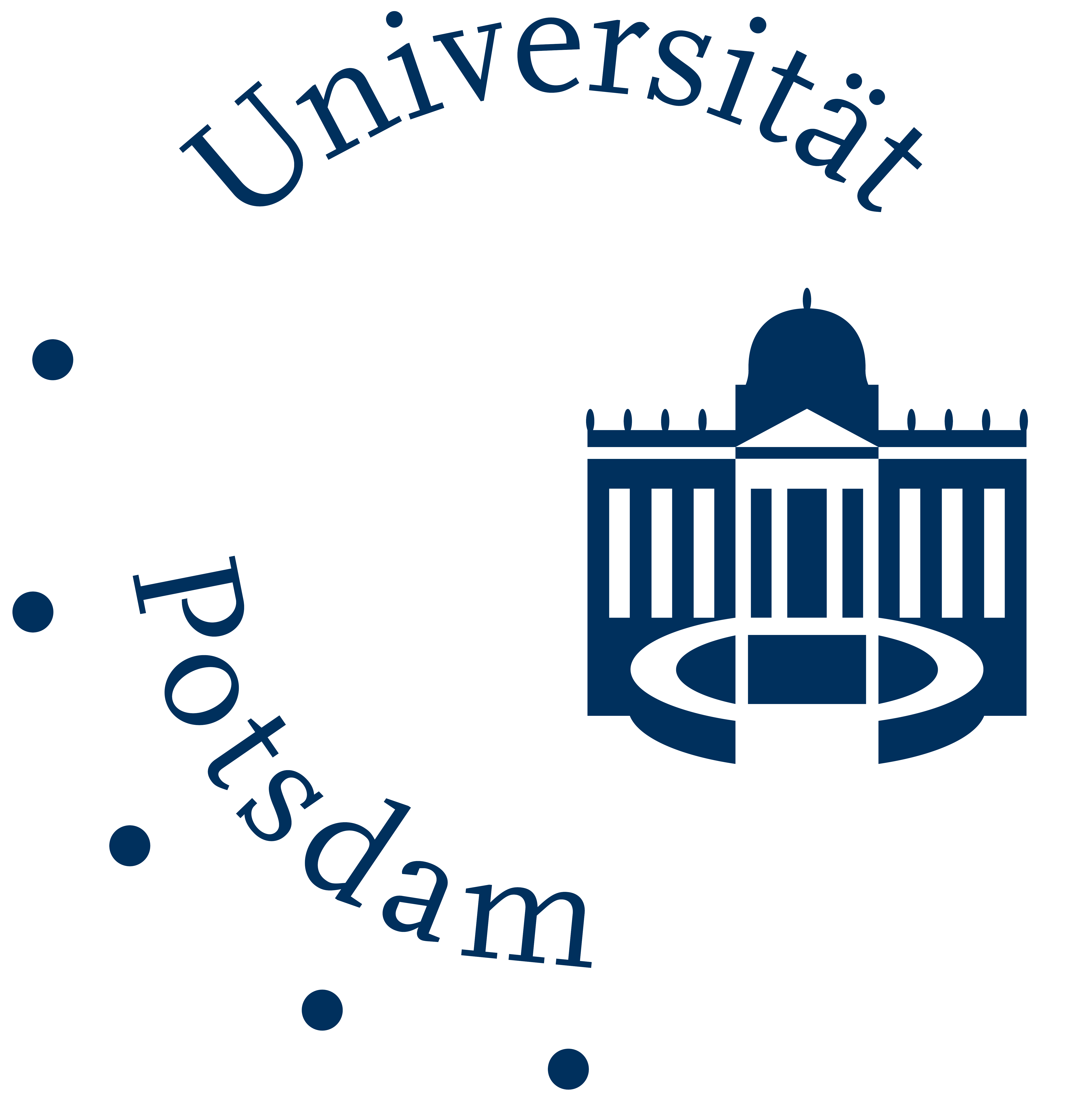} \hfill \includegraphics[height = 3 cm]{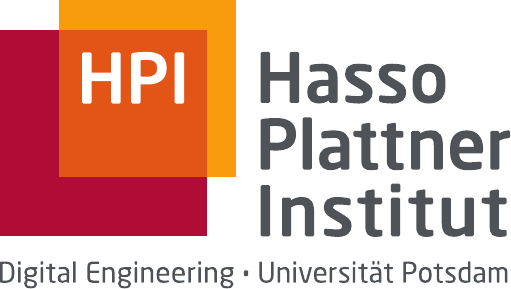}\\
        \vfil
        {\LARGE
            \rule[1 ex]{\textwidth}{1.5 pt}
            \onehalfspacing\printTitleBold\\[1 ex]
            {\vspace*{-1 ex}\Large \printGermanTitle}\\
            \rule[-1 ex]{\textwidth}{1.5 pt}
        }
        \vfil
        {\Large\textbf{\printAuthor}}
        \vfil
        {\large Universitäts\colloquialDegreeNameLowercase arbeit\\[0.25 ex]
        zur Erlangung des akademischen Grades}\\[0.25 ex]
        \bigskip
        {\Large \colloquialDegreeName{} of Science}\\[0.5 ex]
        {\large\emph{(\degreeAbbreviation\,Sc.)}}\\
        \bigskip
        {\large im Studiengang\\[0.25 ex]
        \printProgram}
        \vfil
        {\large eingereicht am \printDateReceived{} am\\[0.25 ex]
        Fachgebiet Algorithm Engineering der\\[0.25 ex]
        Digital-Engineering-Fakultät\\[0.25 ex]
        der Universität Potsdam}
    \end{center}
    
    \vfil
    \begin{table}[h]
        \centering
        \large
        \sffamily 
        {\def\arraystretch{1.2}
            \begin{tabular}{>{\bfseries}p{3.0 cm}p{6.1 cm}}
                Gutachter               & \printNameOfSupervisor\\
                Betreuer               & \printAdditionalExaminers
            \end{tabular}
        }
    \end{table}
\end{titlepage}

\restoregeometry

    \pagestyle{plain}

    \addchap{Abstract}

Local search is a well-known heuristic method used in optimization. In this thesis, we explore its capabilities on the vertex coloring problem, an $NP$-hard problem with relevance in both theoretical analysis and practical application. To recognize limitations in the applicability of local search of the vertex coloring problem, we analyze local search landscapes on differently-structured bipartite graphs. We identify structures that ensure only global optima can exist as well as ones that enable the existence of non-global local optima, showing that on general bipartite graphs, it is possible for local search to return arbitrarily bad results. Further, we analyze the capabilities of local search on graphs where a local optimum can be found. To do so, we introduce a gray-box local search mutation operator that removes less frequent colors with higher probability and prove that it finds an optimal coloring on complete bipartite graphs in an expected run time of $\Theta(n \log n)$. This is a drastic improvement to the exponential tun time of the black-box Random Local Search, showing that gray-box mutation operators can improve the run time of local search.

    \selectlanguage{ngerman}
    \addchap{Zusammenfassung}
    
Lokale Suche ist eine bekannte heuristische Methode, die bei Optimierungsproblemen verwendet wird. In dieser Arbeit untersuchen wir ihre Anwendungsmöglichkeiten bezogen auf das Knotenfärbungsproblem, ein $NP$-schweres Problem mit Relevanz in der theoretischen Analyse sowie praktischen Anwendung. Um die Grenzen der Anwendbarkeit von lokaler Suche auf das Knotenfärbungsproblem zu erkennen, analysieren wir Lokale-Suche-Landschaften auf unterschiedlich aufgebauten bipartiten Graphen. Wir identifizieren sowohl Strukturen in Graphen, die dazu führen, dass es nur globale Optima gibt, als auch solche, die nicht-globale lokale Optima hervorrufen. Damit zeigen wir, dass lokale Suche auf allgemeinen bipartiten Graphen beliebig schlechte Ergebnisse liefern kann. Des weiteren analysieren wir die Fähigkeiten lokaler Suche auf Graphen, bei denen sicher ein globales Optimum gefunden werden kann. Dazu führen wir einen Gray-Box Mutationsoperator ein, der seltenere Farben mit höherer Wahrscheinlichkeit entfernt, und zeigen, dass er in Erwartung in $\Theta(n \log n)$ Iterationen eine optimale Färbung auf vollständigen bipartiten Graphen findet. Dies ist eine drastische Verbesserung zur Laufzeit der Black-Box Random Local Search, womit wir zeigen, dass Gray-Box Mutationsoperatoren die Laufzeit lokaler Suche verbessern können.
    \selectlanguage{american}

    \addchap{Acknowledgments}
    
I am extremely grateful to my advisors, Timo and Aishwarya, for their wonderful help with writing this thesis. Thank you for the frequent meetings and discussions, for always being available when I needed advice and for the many insightful suggestions on how to improve my proofs.

I would also like to thank Antonia and Hendrik (as well as Timo once again for his wonderful support) for the many hours we spent working on gray-box optimization together. I really enjoyed diving into the subject and working on a paper with you!

Finally, thank you to all the members of my bachelor's project -- Alex, Antonia, Felix, Flora, Hendrik, Moritz, Paul and Philip -- for making this project and the work on the thesis as fun as it was.

    \setuptoc{toc}{totoc}
    \tableofcontents

    \pagestyle{headings}
    \mainmatter

    \chapter{Introduction}
    
\textit{Heuristics} are techniques for finding good solutions to optimization problems at a lower computational cost than alternative strategies. They are, as  \citeauthor{vos_metaheuristics_2008} \cite{vos_metaheuristics_2008} puts it, a \textquoteleft rule of thumb\textquoteright \space for how to search for optimal solutions. As such, heuristic approaches are not guaranteed to find an optimal solution, but can reduce the computational workload.

This makes heuristic techniques well-suited for $NP$-complete problems, which are very difficult to solve with exact algorithmic approaches. One such problem, which we intend to focus on in this work, is the \textit{vertex coloring} problem: Given a graph, the goal is to assign a color to each of its vertices such that no two adjacent vertices share the same color \cite{balakrishnan_graph_2012}. Finding the minimum number of colors for which such a coloring exists is known to be $NP$-complete \cite{korte_approximation_2018}. As vertex coloring finds frequent application in scheduling \cite{marx_graph_2004} and register allocation \cite{chaitin_register_1982}, where it is used to assign jobs to available resources, developing algorithms that are able to efficiently find an optimal coloring has a lot of practical relevance.

In this work, we analyze the performance of \textit{local search} on the vertex coloring problem on bipartite graphs. Local search is a basic heuristic strategy that works by iteratively altering solutions locally, meaning with only small changes between solutions, until an optimum is reached. However, due to the limited search space, it is known to get stuck in non-global local optima. This means that a local search algorithm might return a non-optimal solution if all neighboring solutions are worse -- even if another, better solution exists elsewhere \cite{rudolph_escaping_2022}.

According to \citeauthor{watson_introduction_2010} \cite{watson_introduction_2010}, local search heuristics are used widely, but it is not very well understood why or under what circumstances they work well. Most often, local search algorithms are empirically tested for their performance, but not mathematically analyzed. This \textquoteleft competitive testing\textquoteright \space is criticized by \citeauthor{hooker_testing_1995} \cite{hooker_testing_1995} for benefiting development more than research. 

Therefore, we first seek to gain some general understanding regarding the capabilities of local search on the vertex coloring problem. In \Cref{ch:landscape}, we analyze the local search landscape on different configurations of bipartite graphs for non-global local optima. We identify two features that ensure no non-global local optima can exist as well as two features that lead to a local optimum at $3$ and $k$ colors, respectively. The latter shows that local search can yield arbitrarily bad results on certain bipartite graphs. 

In \Cref{ch_kmn}, we shift our focus to complete bipartite graphs, a type of bipartite graphs shown in \Cref{ch:landscape} not to contain local optima. We develop a local search operator that removes less frequent colors with higher probability and prove that it finds a $2$-coloring on complete bipartite graphs in $\Theta(n \log n)$ in expectation. In doing so, we show that an operator tailored specifically to the problem -- a \emph{gray-box} operator -- can drastically outperform standard \emph{black-box} approaches such as Random Local Search, which take exponential time in expectation to solve the same problem.

\section{Vertex Coloring}

We now give a brief introduction to the vertex coloring problem. Due to its practical relevance in scheduling \cite{marx_graph_2004}, vertex coloring is a well-studied problem and we aim to provide an overview of current capabilities and limitations in solving certain instances of the problem.

We call a graph $k$-colorable if a proper $k$-coloring, meaning a coloring using $k$ colors and with no adjacent vertices colored in the same color, exists. For general graphs, it is $NP$-hard to decide whether such a graph is $k$-colorable or to find a $k$-coloring \cite{korte_approximation_2018}, which means that, assuming $P \ne NP$, the problem cannot be solved in polynomial time. 

However, one approach is to accept less-than-optimal colorings. \citeauthor{korte_approximation_2018} \cite{korte_approximation_2018} show that for any undirected graph with maximum degree $k$, a $(k+1)$-coloring exists and can be found in linear time.

Further, if certain properties of graphs are known, proper colorings can be found in polynomial time. One such class of graphs are planar graphs, which, intuitively, are graphs that can be drawn on paper without any edges intersecting.

It is proven by \citeauthor{diestel_colouring_2025} \cite{diestel_colouring_2025} that planar graphs can be $5$-colored. This was later expanded to a proof that all planar graphs are $4$-colorable, although the complexity of the proof can be inferred from the fact that the algorithmic version of the proof is 741 pages long. However, \citeauthor{korte_approximation_2018} \cite{korte_approximation_2018} gives a theorem by Stockmeyer [1973] that deciding whether a general planar graph is $3$-colorable is $NP$-hard. Still, structure in graphs can be exploited to solve this problem for certain instances. For example, Grötzsch [1959], as given by \citeauthor{diestel_colouring_2025} \cite{diestel_colouring_2025}, proves that every planar graph not containing a triangle is $3$-colorable.

Further, while finding a $3$-coloring is hard even on planar graphs, deciding whether a general graph can be colored with less than $3$ colors and, if so, finding such a coloring can be done in linear time as per \citeauthor{korte_approximation_2018} \cite{korte_approximation_2018}. This specifically means that deciding if a graph is $2$-colorable -- also called a \textit{bipartite graph} -- and finding a $2$-coloring can be accomplished in linear time.

\section{Local Search}

Local search in its most general form, as described by \citeauthor{crama_local_1995} \cite{crama_local_1995}, works by gradually transforming a given solution while improving the value of a \emph{fitness function} which measures how \textquoteleft good\textquoteright \space a solution is. Take the vertex coloring problem. There, we start with a proper coloring -- either randomly generated or set manually -- and gradually modify it while maintaining propriety. 

When performing modifications, also called \emph{mutations}, we only consider values in the \emph{neighborhood} of the current coloring. For example, a neighborhood might contain only colorings that can be created by changing the color of a single vertex. This focus on solutions that are close to the current candidate gives the approach the name local search and differentiates it from \emph{global search}, where all feasible solutions can be reached in a single mutation. 

When generating a new solution, we compare its fitness to the old solution through use of the fitness function, which should relate to the metric we are optimizing. For vertex coloring, a common way to assign a fitness value to a coloring is to count the number of colors used, which we then seek to minimize. Therefore, if the new solution uses fewer colors than the old one, it will take its place as the current candidate solution.

The most common local search algorithm is called Random Local Search (RLS). It works by choosing uniformly at random which part of the candidate solution (be it the color of a vertex or, in different examples, the value of a bit) to flip to a different value. It performs well on problems where the global optimum can be reached by continuously increasing the fitness of solutions, as shown by \citeauthor{selman_hill-climbing_2006} \cite{selman_hill-climbing_2006}. However, RLS cannot leave non-global local optima once it reaches them. In this, it differs from global search algorithms such as the well-known \ooea -- an algorithm that chooses independently for each part of the candidate solution whether to mutate it, making it possible to change the entire candidate solution in a single mutation --, which have the entire search space available to them in each mutation \cite{jansen_analyzing_2013}.

Like most heuristic algorithms, RLS is a \textit{black-box} algorithm \cite{jansen_analyzing_2013}. This means that it does not incorporate knowledge about the specific problem. When applying RLS to vertex coloring or another optimization problem such as the vertex cover or traveling salesman problem, the fitness function will be changed to reflect the new problem, but the general algorithm including the mutation operator will remain the same. 

The opposite approach is problem-specific algorithms, where an algorithm is written specifically to solve a single optimization problem, but cannot be used for other problems. In comparison, black-box algorithms have the advantage that they are widely applicable and require less work on the side of the software developer. However, this comes at the drawback that black-box algorithms often have worse run time \cite{jansen_analyzing_2013}.

A middle ground between the two approaches are \textit{gray-box} algorithms. They seek to combine the advantages of both strategies by incorporating some problem-specific knowledge into otherwise black-box algorithms in the hopes of improving the run time. We develop such a gray-box operator and analyze its performance in \Cref{ch_kmn}.

\section{Related Work}

Local search is a much-studied heuristic. Although it is rarely used on its own due to problems escaping non-global local optima, it forms the basis for many other algorithmic strategies. We first offer a general overview of landscape aspects that influence local search and give examples of local search based strategies that were created to better deal with the challenge of non-global local optima. Then, we go into more detail on gray-box operators as an improvement on pure black-box strategies.

\subsection{Local Search Landscape}

Pure local search algorithms are essentially \textit{hillclimbers}, meaning that they can continuously increase the fitness (climb up the hill) until an optimum (the hilltop) is reached, but from there, they can never climb down again to reach a higher peak. On some problems, this behavior is beneficial and leads to good results. This is shown by \citeauthor{selman_hill-climbing_2006} \cite{selman_hill-climbing_2006} for the $N$-queens problem and it is further argued that it works as well for the traveling-salesman problem.

However, this strategy becomes a detriment once a fitness landscape -- meaning the connection of neighboring solutions and their fitness values -- contains non-global local optima. Therefore, when deciding whether local search can solve a given problem, it makes sense to first analyze the landscape for the existence of non-global local optima, which is what we will focus on in \Cref{ch:landscape} for the vertex coloring problem. However, this analysis is only the start. Beyond the existence of non-global local optima, one can also analyze the effect of the size and shape of their \textit{basins of attraction}, meaning the regions from which local search will reach a certain optimum, or the number of non-global local optima. All these factors are relevant for how likely local search is to get stuck in a non-global local optimum instead of reaching the local optimum.

\citeauthor{rudolph_escaping_2022} \cite{rudolph_escaping_2022} analyze several variations of local search algorithms for how well they deal with various testing functions, all of whose fitness landscapes contain non-global local optima with differently-shaped basins of attraction. While the standard Random Local Search (RLS) is found to get stuck in non-global local optima with constant probability on all testing functions, other variations of local search perform better under certain circumstances. Their advantage over RLS is that they have a mechanism for broadening the search space or otherwise escaping non-global local optima. 

For example, Variable Neighborhood Search (VNS), as introduced by \citeauthor{hansen_variable_2018} \cite{hansen_variable_2018}, gradually broadens the neighborhood from solutions that differ in only one aspect (local search) to solutions that differ in two, three and more until an improvement is made or a maximum distance to the original solution is reached. \citeauthor{rudolph_escaping_2022} \cite{rudolph_escaping_2022} show that VNS performs well on several testing functions.

Another common strategy is Simulated Annealing. As explained by \citeauthor{jansen_analyzing_2013} \cite{jansen_analyzing_2013}, it chooses whether to accept new solutions with a probability that increases with the fitness improvement of the new solution and the time since the last new solution was accepted. This means that it can, at times, accept solutions that worsen the fitness and is thereby able to leave non-global local optima, improving its performance in situations where non-global local optima exist.

These strategies can be combined with \emph{restarts}, which means that multiple searches are performed, each from a different randomly generated starting point. As shown by \citeauthor{rudolph_escaping_2022}\cite{rudolph_escaping_2022}, this increases the probability of hitting at least one starting point from which a global optimum can be reached. Further, \citeauthor{reeves_statistical_2004} \cite{reeves_statistical_2004} show that the results of multiple restarts can be used to estimate the number of local optima and thereby the probability of having found the global optimum. They also suggest that the number of local optima are an important measure for how well the different search strategies will perform on certain problems.

In general, however, \citeauthor{rudolph_escaping_2022} \cite{rudolph_escaping_2022} conclude from testing different black-box strategies on various testing functions that no single strategy outperforms all others on all problems and therefore, it is beneficial to incorporate some problem knowledge in the choice of the algorithm. 

\subsection{Gray-box}

As black-box algorithms, RLS and similar strategies suffer from limitations on their run time. The \textit{No Free Lunch Theorem}, as given by \citeauthor{jansen_analyzing_2013} \cite{jansen_analyzing_2013}, states that, on average over all possible problems, all black-box algorithms have the same run time. As a brute force approach is also a black-box algorithm, this implies that all black-box algorithms will have exponential run time on average over all problems. While they can and do perform well on specific problems, this still gives a motivation for moving away from a strictly black-box approach and towards gray-box algorithms. 

There are already various instances where gray-box operators have been successfully used for solving difficult problems. For example, \citeauthor{peters_mixed_2019} \cite{peters_mixed_2019} develop a \ooea using custom mutation operators, including several gray-box operators, for solving a real-world staff assignment problem. This algorithm vastly outperforms an alternative approach using Mixed Integer Programming.

Further, gray-box knowledge of the problem frequently finds use in genetic algorithms using crossover, meaning algorithms where multiple old solutions are combined to create a new one. \citeauthor{friedrich_crossover_2023} \cite{friedrich_crossover_2023} suggest that any crossover operator should be either balanced, order-unbiased and inheritance-respectful, or incorporate problem knowledge, implying that problem knowledge is an essential part of crossover operators.

Another common use case for gray-box mutation operators is on $NP$-hard problems, where they often improve the run time when compared to their black-box counterparts, which will also be shown to be the case in this work. In the following, we provide some examples of gray-box operators being used on different $NP$-hard problems.

\citeauthor{whitley_mk_2015} \cite{whitley_mk_2015} uses gray-box operators for solving certain pseudo-Boolean optimization problems in polynomial time. Likewise, \citeauthor{baguley_analysis_2022} \cite{baguley_analysis_2022} develop a gray-box mutation operator for the \ooea on the vertex cover problem and compare its performance to the standard \ooea. Here, the gray-box operator is able to improve the run time from $\Theta(n^{4})$ to $\Theta(n^{3})$ on paths and drastically increases the probability of finding the global optimum on complete bipartite graphs.

For the vertex coloring problem, which we will further analyze in this work, \citeauthor{sudholt_crossover_2005} \cite{sudholt_crossover_2005} proves that a \ooea has an expected run time in $\exp(\Omega(n))$ on complete binary trees, which \citeauthor{heinen_exponential_2025} \cite{heinen_exponential_2025} proves to likewise hold for paths and complete bipartite graphs. The proof given can be adapted to prove that the same run time holds for RLS. This shows that standard black-box strategies take a very long time to find an optimal coloring, making them ill-suited to solving the vertex coloring problem.

An improved algorithm incorporating problem knowledge is introduced by \citeauthor{cheong_analysis_2010} \cite{cheong_analysis_2010}, who develop an iterated local search algorithm and introduce two mutation operators using different degrees of problem knowledge. The general principle of the algorithm is to first perform a mutation and then a local search, repeating this process until an optimum is found.

The algorithm uses a variant of local search called \emph{Grundy Local Search}, as introduced by \citeauthor{hedetniemi_linear_2003} \cite{hedetniemi_linear_2003}. Colors are assigned indices at the beginning and during local search, each vertex is flipped to the smallest color that does not appear in its neighborhood. This is similar to our operator insofar that we also work by assigning ranks to colors ranks. However, \citeauthor{hedetniemi_linear_2003} \cite{hedetniemi_linear_2003}, determine the indices in the beginning and have them remain fixed, while our operator calculates indices depending on the number of appearances of the color in the current coloring. Further, Grundy Local Search prefers colors of smaller indices by assigning each vertex the smallest color not taken by any of its neighbors until all colors have the smallest possible color, while we remove colors of smaller rank with higher priority.

The first mutation operator introduced utilizes \emph{Kempe chains}, the idea of which is to swap all instances of two colors in a component of the graph containing only these colors. It is shown that an iterated local search using Kempe chains as the mutation operator performs well on paths \cite{cheong_analysis_2010} and complete binary trees \cite{bossek_time_2021}, but takes exponential time on general bipartite graphs \cite{cheong_analysis_2010}.

As an improvement on the Kempe chains mutation operator, \citeauthor{cheong_analysis_2010} \cite{cheong_analysis_2010} introduce an alternative operator called \textit{color elimination}. Based on multiple Kempe chain operations, its goal is to remove a given color from the neighborhood of a vertex by replacing all instances with a different color. Similar to our work, this mutation operator makes use of the knowledge that it is beneficial to remove several instances of one color. An iterated local search using color elimination can $2$-color any bipartite graph in $\mathcal{O}(n^{2} \log n)$, with other polynomial-time results for odd rings, sparse random graphs and 5-coloring planar graphs \cite{cheong_analysis_2010}. 

This work is extended by \citeauthor{bossek_time_2021} \cite{bossek_time_2021}, who also find the color elimination operator, which uses more problem knowledge, to outperform Kempe chains on dynamic graph coloring problems. Further, the results show that algorithms tailored to the region of the graph where dynamic changes are made outperform more general algorithms, which again shows that incorporating problem knowledge is advantageous.

    \chapter{Preliminaries}
    
In this chapter, we introduce the concepts we will be using throughout the work. We begin with definitions for the vertex coloring problem. Afterwards, we formally define the Random Local Search algorithm, which will be used for comparison frequently, and introduce our fitness function. Finally, we give a brief overview over relevant theorems and concepts from probability theory.

\section{Vertex Coloring}

Intuitively, the idea behind vertex coloring is to assign each vertex in a given graph a color, with a coloring being called \emph{proper} if no two adjacent vertices share the same color.

The following definition is based on \citeauthor{balakrishnan_graph_2012} \cite{balakrishnan_graph_2012}. 
\begin{definition}[Vertex Coloring]
    Let $G = (V, E)$ be a graph and $S$ a set of distinct colors. We call a map $c \colon V \to S$ a \emph{vertex coloring} of $G$. For a vertex $v \in V$, we call $c(v)$ its color.
    If, for all $\{ u, v \} \in E$,  it holds that $c(u) \ne c(v)$, then we call $c$ a proper coloring.
\end{definition}

We further introduce the \emph{term monochromatic edge} to refer to edges between two vertices of the same color.

\begin{definition}[Monochromatic Edge]
    Let $G = (V, E)$ be a graph, $S$ a set of distinct colors and $c \colon V \to S$ a vertex coloring of $G$. Then, we call an edge $\{ u, v \} \in E$ with  $c(u) = c(v)$ a \emph{monochromatic edge}.
\end{definition}

Note that a coloring is proper if and only if it contains no monochromatic edges.

Finding a proper coloring for any given graph is trivially accomplished by assigning a unique color to each vertex. However, when talking about colorings, the minimum number of colors needed for a proper coloring is often of interest. Here, we call a graph \emph{$k$-colorable} if a proper coloring using $k$ colors exists. \citeauthor{balakrishnan_graph_2012} \cite{balakrishnan_graph_2012} define this as follows.

\begin{definition}[$k$-Coloring]
    Let $G = (V, E)$ be a graph, $S$ a set of distinct colors and $c \colon V \to S$ a vertex coloring of $G$. $c$ is a $k$-coloring for $G$ if $\vert S \vert = k$. $G$ is called \emph{$k$-colorable} if such a coloring exists and it is proper. 
\end{definition}

Using this definition, we define classes of graphs depending on whether they are $k$-colorable for a given $k$. For example, $2$-colorable graphs are called \emph{bipartite graphs}, consisting of two partitions that are each colored in a single color.

\begin{definition}[Bipartite Graphs]
    Let $G = (V, E)$ be a graph. We call $G$ a \emph{bipartite graph} if it is $2$-colorable. Let $c$ be such a proper $2$-coloring. A subset $V' \subset V$ is called a partition of $G$ if, for all $u, v \in V'$, it holds that $c(u)=c(v)$ and for all $w \notin V'$, it holds that $c(u) \ne c(w)$. 
\end{definition}

A special sub-type of bipartite graphs relevant to this work is \emph{complete bipartite graphs}, where each vertex is connected to all vertices in the opposite partition. 

\begin{definition}[Complete Bipartite Graphs]
    Let $G = (V, E)$ be a graph. We call $G$ a complete bipartite graph and denote it by $K_{M,N}$ if it is a \emph{complete bipartite graph} with partitions of size $M$ and $N$ and if any vertex in one partition is adjacent to all vertices of the other partition. We call the partitions $V_M, V_N \subset V$; this then means that for all $v \in V_M$ and $u \in V_N$, $\{u,v\}\in E$.
\end{definition}

\section{Local Search}

We now introduce the algorithmic concepts used in this work. 

A general trajectory based algorithm for finding a minimal coloring on a given graph is given in \Cref{alg:tboa}. The algorithm starts with the only coloring we know for sure to be proper, namely each vertex colored in a distinct color. Given a current coloring $x$, we perform a mutation with a given \textsc{Mutate}-operator on $x$, receiving a new coloring $x'$. If its fitness is as least as good as that of $x$, we then replace $x$ with $x'$. This process is repeated, so that we continuously improve our coloring while possible.

As given in the algorithm, this loop continues to infinity, as we can never know when no further improvements can be made. In practice, trajectory based optimization algorithms will have a stopping condition such as the number of iterations since the beginning of the algorithm or since the last improvement \cite{jansen_analyzing_2013}. However, for theoretical analysis, we are instead interested in the expected number of fitness evaluations until the global optimum is reached for the first time.

Let $n\in \mathbb{N}$ be the number of vertices in a graph $G$ and $f\colon \mathbb{N}^n \to \mathbb{N}^{n}$ be the fitness function. Given $\textsc{Mutate}\colon \mathbb{N}^n \to \mathbb{N}^n$ (the mutation operator), our generalized trajectory based optimization algorithm is then given as follows. Note that while we defined a coloring as a mapping $c$ of vertices to colors, while working with algorithms, we assume an order of vertices $v_1, \ldots, v_n$ and use an array $x \in [n]^n$ for colorings, where $x[i] = c(v_i)$ for all $i \in [n]$.

\begin{algorithm}
    let $x = (1, \dots, n)$\;
    \While{true}
    {
        let $x' = \textsc{Mutate}(x)$\;
        \If{$f(x') \leq f(x)$}
        {
            set $x \leftarrow x'$\;
        }
    }
	\caption{Trajectory Based Optimization Algorithm}
    \label{alg:tboa}
\end{algorithm}

We will now first introduce the fitness function we use and then give a common local search mutation operator.

\subsection{Fitness Function \textsc{NumUsedColors}} \label{sec:num_used_cols}

The general idea of our fitness function is to gradually decrease the number of colors used in the graph by assigning better fitness to colorings that use fewer colors. Better fitness in this case means a lower fitness value, as we seek to minimize the fitness. At the same time, we must ensure that our colorings remain proper, meaning that they contain no monochromatic edges.

The fitness function works on tuples using the following lexicographical ordering. For $n \in \mathbb{N}$ and $x, y \in \mathbb{N}^n$, we write $x < y$ if there exists an $i \in [n]$ such that $x[i] < y[i]$ and, for all $j < i$, $x[j] = y[j]$. That is, $x$ is less than $y$ in the first entry where they disagree. We write $x \leq y$ if $x = y$ or $x < y$. This ordering relation is trivially total.

Let $\textsc{NumUsedColors} \colon [n]^n \to \mathbb{N}^{2}$ be the following fitness function: For a given coloring $x \in [n]^n$, let $m$ be the number of monochromatic edges, which is given by  $\lvert \{ \{ u, v \} \in E \mid x[u] = x[v] \} \rvert$. Further, let $k \in \mathbb{N}$ be the total number of colors that appear in $x$. Then, let $\textsc{NumUsedColors}(x) = (m, k)$.

This fitness function's primary objective is minimizing the number of monochromatic edges in the graph. As a secondary objective, it minimizes the total number of colors used in the graph. Therefore, once we reach a proper coloring, the fitness function never accepts an improper coloring afterwards, but prefers smaller values of $k$. Note that since our algorithm starts in a proper coloring, no mutation which creates an improper coloring will ever be accepted. The first part of the tuple therefore functions as a penalty term, ensuring that our coloring never turns improper.

\subsection{Random Local Search (RLS)}

While there exist many mutation operators for trajectory based optimization algorithms, we allow only one color to change per mutation. As such, we only consider local search algorithms, which are algorithms where any offspring is in the direct neighborhood of the parent.

We introduce the operator One-Flip. On input $x \in [n]^n$, it chooses $j \in [n]$ and a value in $[n]$ to change the color in position $j$ to uniformly at random, then returns the result.
\begin{algorithm}
    \textbf{Input:} $x \in [n]^{n}$\;
    {
        choose $j \in [n]$ uniformly at random\;
        choose $c \in [n]$ uniformly at random\;
        $x[j]=c$\;
        \Return $x$\;
    }
    \caption{One-Flip}
    \label{alg:rls}
\end{algorithm}

Note that using One-Flip as the mutation operator in the trajectory based optimization algorithm (\Cref{alg:tboa}) creates the well-known Random Local Search algorithm.

\section{Probability Theory}

Unlike with deterministic algorithms, it is impossible to determine an exact run time for randomized algorithms. Therefore, we instead try to bound the expected value of their run time.

To do so, we make use of the concept of stochastic processes. 

\begin{definition}[Stochastic Process]
    A stochastic process is a family of random variables, indexed against another variable or a set of variables. We write $(X_t)_{t \in \mathbb{N}}$ for random variables $X$ and index $t$.
\end{definition}

When working with local search on vertex coloring, that family of random variables usually describes some property of the coloring, such as the number of occurrences of a certain color. We then index against the number of fitness evaluations we already performed, which we refer to as time. Note that the number of fitness evaluations is equal to the number of mutations performed in the given algorithms.

For example, we might have a stochastic process $(X_t)_{t \in \mathbb{N}}$ where each random variable $X_t$ denotes the number of vertices colored in color $1$ at time $t$. Taken individually, each of those variables are random. However, given a certain $X_t$, we know which values $X_{t+1}$ can take (namely $X_t$, $X_t-1$ or $X_t+1$, as we can only change one color per mutation) and what their probabilities are.

We use such stochastic processes to calculate the expected run time of our algorithms on a given problem by mapping the problem to a stochastic process and then determining how long it takes that process to reach $0$. This is accomplished through use of a class of theorems called drift theorems. The idea behind drift is to determine whether, for a given $X_t$, it is more likely for $X_{t+1}$ to be greater or less than $X_t$ and with what probability either of those changes will occur. 

We use the following drift theorem called Additive Drift Theorem, which is given as Theorem 2.1 by \citeauthor{kotzing_theory_2024} \cite{kotzing_theory_2024}. It describes a situation where there exists a value $\delta$ such that $X_t$ always decreases by at least $\delta$ in expectation. Then, we can conclude that the time $T$ until the process reaches $0$ is at most $\frac{\Ex{X_0}}{\delta}$, with $X_0$ being the starting value of the stochastic process.

\begin{theorem}[Additive Drift, Upper Bound] \label{thm:additive_drift}
    Let $(X_t)_{t \in \mathbb{N}}$ be an integrable process over $\mathbb{R}$, and let $T = \min\{t \in \mathbb{N} \mid X_t \leq 0\}$. Furthermore, suppose the following two conditions hold (non-negativity, drift).
    \begin{enumerate}
        \item[(NN)] For all $t \leq T, X_t \geq 0$.
        \item[(D)] There is a $\delta > 0$, such that, for all $t < T$, $\Ex{X_{t} - X_{t + 1} ~|~ X_0, \dots, X_t} \geq \delta$.
    \end{enumerate}
    Then \[
    \Ex{T} \leq \frac{\Ex{X_0}}{\delta}.
    \]
\end{theorem}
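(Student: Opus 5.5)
The plan is to prove the Additive Drift Theorem (\Cref{thm:additive_drift}) by a stopped-process argument. Consider the process $Y_t = X_{\min(t,T)} + \delta \min(t,T)$ and show that it is a supermartingale with respect to the filtration generated by $X_0, \dots, X_t$. On the event $\{t < T\}$, which is determined by $X_0, \dots, X_t$, condition (D) gives
\[
\Ex{Y_{t+1} - Y_t \mid X_0, \dots, X_t} = \Ex{X_{t+1} - X_t \mid X_0, \dots, X_t} + \delta \leq 0 .
\]
On $\{t \geq T\}$ we have $Y_{t+1} = Y_t$. Taking expectations and iterating yields $\Ex{Y_t} \leq \Ex{Y_0} = \Ex{X_0}$ for all $t$; integrability of the process ensures all these expectations are finite.

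Next, use (NN) to drop the $X$ term. Since $X_{\min(t,T)} \geq 0$, we get $\delta\, \Ex{\min(t,T)} \leq \Ex{X_0}$ for every $t \in \mathbb{N}$. As $t \to \infty$, $\min(t,T)$ increases monotonically to $T$ (with value $\infty$ on $\{T = \infty\}$). The monotone convergence theorem then gives $\delta\,\Ex{T} \leq \Ex{X_0}$. Dividing by $\delta > 0$ gives the claim, and as a byproduct $T < \infty$ almost surely whenever $\Ex{X_0} < \infty$.

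The main obstacle is handling the case that $T$ is not known to be finite, or does not have finite expectation, in advance. This rules out applying optional stopping directly at $T$, which is why the argument works with the truncated times $\min(t,T)$ and passes to the limit only at the end, where monotone convergence requires no integrability of $T$. A minor point to check is that conditioning on $X_0, \dots, X_t$ is compatible with the event $\{t<T\}$. This holds because $T$ is a stopping time for that filtration, so the indicator $\mathbf{1}_{\{t<T\}}$ can be pulled out of the conditional expectation.
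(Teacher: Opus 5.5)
Your proof is correct and complete. The paper does not prove this statement itself; it imports it as Theorem~2.1 from \cite{kotzing_theory_2024}. Your argument therefore makes that cited result self-contained, and the stopped-process argument you give is the standard one.

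The details that need checking all go through:
\begin{itemize}
\item The increment is $Y_{t+1}-Y_t=\mathbf{1}_{\{t<T\}}(X_{t+1}-X_t+\delta)$, and $\{t<T\}=\{X_0>0,\dots,X_t>0\}$ is determined by $X_0,\dots,X_t$, which gives the supermartingale property.
\item Each $Y_t$ is integrable because $|X_{\min(t,T)}|\le\sum_{s\le t}|X_s|$.
\item Nonnegativity of $X_{\min(t,T)}$ uses (NN) at $t=T$ on the event $\{T<\infty\}$.
\end{itemize}

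Truncating at $\min(t,T)$ and passing to the limit by monotone convergence is exactly what lets you avoid presupposing $\Ex{T}<\infty$, or even $T<\infty$. Almost-sure finiteness of $T$ then comes out as a consequence, since $\Ex{X_0}<\infty$ by integrability of $X_0$.

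If you prefer to avoid martingale language, the same computation can be written as the telescoping estimate $\Ex{X_0}\ge\Ex{X_0-X_{\min(t,T)}}=\sum_{s<t}\Ex{(X_s-X_{s+1})\mathbf{1}_{\{s<T\}}}\ge\delta\sum_{s<t}\Prob{T>s}=\delta\,\Ex{\min(t,T)}$.
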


    \chapter{Landscape Analysis}
    \label{ch:landscape}

In this chapter, we analyze the local search fitness landscape of bipartite graphs using the fitness function \textsc{NumUsedColors} as defined in \Cref{sec:num_used_cols}. Note that the fitness is given in tuples of values, which we order lexicographically. 
For this, let $\chi=[n]^{n}$ be the set of all possible colorings. We define a neighborhood $$R=\{(x,y)\in \chi^{2} \mid \exists ! i \in [n]: x[i] \ne y[i]\}.$$ This means that we consider two colorings $x$ and $y$ to be neighbors if $x$ can be transformed into $y$ by changing the color of a single vertex. 

In order to better talk about how colorings can be transformed into each other, we introduce the term \emph{positive path}. Intuitively, it means that there is a path of neighboring colorings leading from $x$ to $y$ such that no coloring on the path has a worse fitness than its predecessor. 

\begin{definition} [Positive Path] \label{def:pp}
    Let $R$ be a neighborhood as above and let $x,y$ be colorings. Then, we say that a \emph{positive path} exists from $x$ to $y$ if there are colorings $z_1, \dots, z_i$ such that $(x, z_1), (z_1, z_2), \dots , (z_i, y) \in R$ and $\textsc{NumUsedColors}(x) \ge \textsc{NumUsedColors}(z_1) \ge \dots \ge \textsc{NumUsedColors}(z_i) \ge \textsc{NumUsedColors}(y)$. 
\end{definition}

We seek to analyze the landscape $R$ for the existence of local optima, meaning, intuitively, colorings where all neighboring colorings have a worse fitness. We are especially interested in non-global local optima, in this case meaning colorings using more than $2$ colors that are still local optima. We now give a formal definition for the different types of local optima.

\begin{definition} [Local Optimum]
    Let $x$ be a coloring and $R$ a neighborhood as defined above. Then, we call $x$ a \emph{local optimum} if, for all colorings $y$ with $(x,y) \in R$, it holds that $\textsc{NumUsedColors}(x) < \textsc{NumUsedColors}(y)$. Further, we call $x$ \emph{global optimum} if, for all colorings $z$, $\textsc{NumUsedColors}(z) \ge \textsc{NumUsedColors}(x)$. 
\end{definition}

Now we define the term \emph{plateau} to refer to a set of neighboring colorings such that all of them have the same fitness. 

\begin{definition} [Plateau]
    Let $P \subset \chi$ be a set of colorings such that for all $x, y \in P$ it holds that $\textsc{NumUsedColors}(x) = \textsc{NumUsedColors}(y)$ and that there exists a positive path between $x$ and $y$ in $P$. Then, we call $P$ a \emph{plateau}.
\end{definition}

We now define a local optimum consisting of a plateau.

\begin{definition} [Plateau Local Optimum]
    Let $P \subset \chi$ be a plateau. We call $P$ a \emph{plateau local optimum} if, for all colorings $y \notin P$ neighboring a coloring $x \in P$, it holds that $\textsc{NumUsedColors}(x) < \textsc{NumUsedColors}(y)$.
\end{definition}

Note that any local optimum is also a plateau local optimum. This means that in proving no non-global plateau local optima exist, we also show that no non-global local optima exist. 

Further, note the relationship between non-global local plateau optima and positive paths, which we will use in the following proofs: For any coloring $x$ that is part of a non-global plateau local optimum, no positive path exists to a global optimum. This follows from the fact that all colorings neighboring the plateau have a worse fitness than the plateau, making such a positive path impossible. 


We now analyze the neighborhood relationship defined above on different types of bipartite graphs for local optima in the landscape. First, we analyze graph configurations where no non-global local optima exist. Then, we provide graph configurations containing non-global local optima. For both, we regard only local optima consisting of proper colorings.

\section{Cases Without Non-global Local Optima}

In this section, we examine the following configurations of bipartite graphs where no non-global plateau local optima exist:
\begin{itemize}
    \item At least one vertex is universal to the opposite partition (see \Cref{def_universal}). 
    \item The graph is acyclic.
\end{itemize}

In the following, we analyze both of these configurations in detail.

\subsection{At Least One Vertex is Universal to the Opposite Partition} \label{sec:universal}

Intuitively, we refer to a vertex as \emph{universal to the opposite partition} if it is connected to all vertices in the opposite partition. One possible example for this are complete bipartite graphs, where every vertex is universal to the opposite partition.

\begin{definition} [Universal to the Opposite Partition] \label{def_universal}
    Let $G=(V, E)$ be a bipartite graph with partitions $V_M, V_N$. Without loss of generality, let $v \in V_N$. Then, we call $v$ \emph{universal to the opposite partition} if, for all $u \in V_M$, it holds that $\{u, v\} \in E$.
\end{definition}

Now, we prove that the existence of a vertex that is universal to the opposite partition implies that no non-global plateau local optima exist.

\begin{theorem} \label{the:universal}
   If there is at least one vertex universal to the opposite partition in a bipartite graph, no non-global plateau local optima in $R$ exist.
\end{theorem}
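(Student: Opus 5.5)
We show that from every proper coloring $x$ that uses $k \ge 3$ colors there is a positive path to a proper coloring with at most $k-1$ colors. Iterating this gives a positive path from any proper coloring to a proper $2$-coloring, which is a global optimum. By the remark after the definition of plateau local optima, no coloring on a non-global plateau local optimum can have such a path, so none exist. (A proper coloring always has at least $2$ colors, since bipartite graphs with an edge need two; the edgeless case is trivial.)

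Let $v \in V_N$ be universal to the opposite partition and let $a = x(v)$. Because $v$ is adjacent to every vertex of $V_M$ and $x$ is proper, no vertex of $V_M$ has color $a$. Our first attempt is to recolor every vertex of $V_N$ to $a$, one at a time. Each such step keeps the coloring proper, since $V_N$ is an independent set and $a$ does not occur in $V_M$. It also never increases the number of colors, since $a$ is already used. So each step is a neighbor move whose fitness is at most that of its predecessor. After these steps $V_N$ is monochromatic in $a$, and the colors still present are $a$ together with the colors on $V_M$.

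Next we make $V_M$ monochromatic. Pick any vertex $u \in V_M$ and let $b = x(u) \ne a$. Recolor every other vertex of $V_M$ to $b$. This stays proper because $V_N$ now uses only color $a \ne b$ and $V_M$ is independent. It again never adds a color, since $b$ is present throughout because $u$ keeps it. The result is a proper $2$-coloring reached along a positive path. This in fact gives a direct path rather than one that removes a single color at a time, and it is the path we use.

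The main point to check is that each intermediate coloring is at least as good lexicographically as its predecessor. The monochromatic-edge count stays at $0$, and the color count is non-increasing because each recoloring uses a color already present and can only eliminate the last occurrence of some color. A step that recolors a vertex to the color it already has is not a neighbor move under $R$, so such steps are skipped. The subtle part is the role of universality: it guarantees that $a$ is absent from $V_M$, which is what makes the first phase keep the coloring proper.
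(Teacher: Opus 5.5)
Your proposal is correct and takes essentially the same route as the paper. The color $x(v)$ of the universal vertex cannot occur in $V_M$, so you first recolor all of $V_N$ to $x(v)$ and then recolor all of $V_M$ to a color already present there. Each step keeps the coloring proper without adding colors, which gives a positive path to a proper $2$-coloring and rules out non-global plateau local optima.
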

\begin{proof}

Let $G=(V,E)$ be a bipartite graph and let $V_N, V_M$ denote its partitions. Without loss of generality, let $v \in V_N$ be a vertex that is universal to the opposite partition, meaning that for all $u \in V_M$, there exists an edge $\{u, v\}\in E$. 

Let $x$ be a proper coloring that contains at least three colors. We now prove that $x$ is not in a plateau local optimum by showing the existence of a positive path from $x$ to a coloring containing only two colors. 

As $x$ is proper, we know that it contains no monochromatic edges. Therefore, for any given vertex $u \in V_M$, we have $x[u] \ne x[v]$ as we know that $\{u, v\}\in E$. This means the color $x[v]$ does not appear anywhere in the partition $V_M$.

We now prove the existence of a positive path in two steps: First, we show that a positive path exists from $x$ to a coloring $x_i$ where all vertices in $V_N$ are colored in the same color. Then, we show a positive path exists from $x_i$ to a coloring $y_l$, where all vertices in $V_M$ are colored likewise in one color. As $y_l$ then contains only 2 colors, it holds that $\textsc{NumUsedColors}(x) > \textsc{NumUsedColors}(y_l)$ and $y_l$ is a global optimum.

Assuming that $V_N$ is colored in at least two colors in $x$, let $v_1, \dots, v_i \in V_N$ be the vertices in $V_N$ such that for all $j\le i$, it holds that $x[v_j] \ne x[v]$. Then, let $x_1$ be a coloring neighboring $x$ such that $x_1[v_1] = x[v] \ne x[v_1]$. As no vertex in $V_M$ is colored in $x[v]$, we know that $x_1$ contains no monochromatic edges. Further, the number of colors in use did not increase, as the color $x_1[v_1]$ was already in use in $x$ as $x[v]$. Therefore, $\textsc{NumUsedColors}(x) \ge \textsc{NumUsedColor}(x_1)$.

For all $1<j\le i$, let $x_j$ be a coloring neighboring $x_{j-1}$ with $v_j$ colored such that $x_j[v_j] = x[v] \ne x_{j-1}[v_j]$. As no vertex in $V_M$ is colored in $x[v]$, we know that $x_j$ contains no monochromatic edges. Further, the number of colors in use did not increase as the color $x_j[v_j]$ was already in use in $x_{j-1}$ as $x_{j-1}[v]$. Therefore, $\textsc{NumUsedColors}(x_{j-1}) \ge \textsc{NumUsedColor}(x_j)$.

Using this, we know that a positive path $x, x_1, \dots, x_i$ exists from a coloring $x$ to a coloring $x_i$ where all vertices in $V_N$ are colored in the same color.

Now, we will prove the existence of a positive path from $x_i$ to a coloring $y_l$.

Let $c$ be a color in use in $V_M$ in $x_i$. We note that this color does not appear in $V_N$, which is now colored in a single color, as $x_i$ contains no monochromatic edges. Let $u_1, \dots, u_l \in V_M$ be vertices such that, for all $j \le l$, it holds that $x[u_j] \ne c$. Then, let $y_1$ be a coloring neighboring $x_i$ such that $y_1[u_1] = c \ne x_i[u_1]$. As no vertex in $V_N$ is colored in $c$, we know that $y_1$ contains no monochromatic edges. Further, the number of colors in the coloring did not increase, as the color $c$ was already in use in $x_i$ Therefore, $\textsc{NumUsedColors}(x_i) \ge \textsc{NumUsedColor}(y_1)$.

For all $1<j\le l$, let $y_j$ be a coloring neighboring $y_{j-1}$ such that $y_j[u_j] = c \ne y_{j-1}[u_j]$. As no vertex in $V_N$ is colored in $c$, we know that $y_j$ contains no monochromatic edges. Further, the number of colors in the coloring did not increase, as the color $y_j[u_j]$ was already in use in $y_{j-1}$. Therefore, $\textsc{NumUsedColors}(y_{j-1}) \ge \textsc{NumUsedColor}(y_j)$.

Using this, we know that a positive path $x_i, y_1, \dots, y_l$ exists from a coloring $x_i$ to a coloring $y_l$ where all vertices in $V_M$ are colored in the same color. 

As both $V_M$ and $V_N$ are colored in a single color in $y_l$, it is a $2$-coloring. With both of them proper, we know that $\textsc{NumUsedColors}(y_l) < \textsc{NumUsedColor}(x)$. Further, a positive path exists between $x$ and $y_l$, proving the theorem.

\end{proof}

\subsection{Acyclic Graphs}

In this section, we prove that acyclic graphs do not contain non-global local optima by showing that a positive path from any proper coloring to a proper $2$-coloring exists.

\begin{theorem} 
    Let $G=(V,E)$ be an acyclic graph. Then, $G$ does not contain plateau local optima at more than $2$ colors.
\end{theorem}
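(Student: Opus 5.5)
Given any proper coloring $x$ of the acyclic graph $G$ that uses at least three colors, I will construct a positive path from $x$ to a proper coloring with at most two colors, exactly as in the proof of \Cref{the:universal}. Every coloring on a non-global plateau local optimum admits no positive path to a global optimum, so this rules such optima out.

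Since $G$ is a forest, I handle each tree component separately. Disconnected components can be recolored independently, because changing colors inside one component never creates a monochromatic edge in another. I fix two colors $a,b$ that already occur in $x$. The aim is to reach the coloring where each tree is properly $2$-colored with $\{a,b\}$, without ever introducing a new color or a monochromatic edge. Along the way the number of colors in use never increases: it only drops when the last vertex of some color is recolored. So every step satisfies $\textsc{NumUsedColors}(z_{j})\ge\textsc{NumUsedColors}(z_{j+1})$. The final coloring uses two colors, or one if $G$ has no edges, and is therefore a global optimum.

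The main step is induction on the number of vertices, peeling off leaves. Let $T$ be a tree component with a leaf $\ell$ whose unique neighbor is $p$. By induction, there is a sequence of single-vertex recolorings of $T-\ell$ that uses only colors already present, keeps $T-\ell$ proper, and ends in an $\{a,b\}$ $2$-coloring. I lift this sequence to $T$ and handle conflicts with $\ell$ on the fly. Whenever the next move would give $p$ the current color of $\ell$, I first recolor $\ell$ to a color in $\{a,b\}$ different from both the current and the target color of $p$. If no such color exists, I use any other color currently in use. This is safe because $\ell$ has only the one neighbor $p$, so any color other than $p$'s is proper for $\ell$, and at least two colors are always in use while the number of colors is at least three. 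At the end, $p\in\{a,b\}$, and I set $\ell$ to the opposite color of $\{a,b\}$, which is legal for the same reason.

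The main obstacle is the bookkeeping that the colors $a$ and $b$ stay in use throughout, so that every move really lands in an already used color. One fix is to choose $a,b$ as colors of two fixed vertices $r_1,r_2$ and recolor those last. A cleaner fix is to pick a root $r$ with $x[r]=a$ and a neighbor $s$ of $r$ with $x[s]=b$, then process vertices in BFS order from $r$. Each vertex at odd depth is set to $b$ and each at even depth to $a$, and whenever the target color clashes with a not-yet-processed child, that child is first moved aside to some color already in use that differs from its parent's. In the tree, a vertex's only processed neighbor is its parent, so the parent-child constraints are the only ones to check. I expect to write the proof in this BFS form, verifying at each step that the edge to the parent is non-monochromatic and that the edges to the children are repaired in advance.
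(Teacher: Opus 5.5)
\textbf{The BFS plan has a genuine gap.} The problem is the move-aside step. Suppose you process $u$, whose current color is $c\neq t(u)$, and push a child $w$ of $u$ off the color $t(u)$. The new color of $w$ must avoid $c$ and $t(u)$, but also the colors of all children of $w$, which are unprocessed and arbitrary. Your remark that a vertex's only processed neighbor is its parent justifies the step for $u$, not for $w$.

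The step can fail outright. Take a path $r,s,w_2,w_3,\dots,w_m$ colored cyclically $a,b,c,a,b,c,\dots$ with only these three colors in use. Processing $w_2$ (target $a$) forces $w_3$ off $a$ to a color other than $c$, i.e.\ to $b$, and $w_4$ blocks this. Every interior $w_j$ with $j\ge 3$ sees both other colors among its neighbors, so it cannot move at all. The only way out is a cascade that starts at the leaf $w_m$ and shifts colors back along the whole path. So the repair has unbounded depth and must be organized recursively.

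That is exactly what the paper's \Cref{lem:recolor} supplies. By induction on the depth of $v$, any $v$ with parent $p$ can be recolored to any color $c'\neq x[p]$ occurring outside its subtree, via a positive path that only touches that subtree. You first recursively recolor all children of $v$ to $x[p]$; this is legal since $x[p]\neq x[v]$ and $x[p]$ occurs at $p$. Then you recolor $v$. The paper's level-by-level sweep from the root runs on top of this lemma.

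\textbf{Your leaf-peeling sketch avoids this.} The induction hypothesis for the smaller graph already contains every cascade, and the only vertex ever pushed aside is the leaf $\ell$, whose sole constraint is $p$. It becomes a correct proof, different from the paper's, once you fix the bookkeeping with your anchors:
\begin{enumerate}
\item Choose adjacent $r,s$ with $a=x[r]$, $b=x[s]$, and never peel them. A tree with at least three vertices always has a leaf outside $\{r,s\}$.
\item Apply the induction hypothesis to the graph $G-\ell$, not to $T-\ell$ with $\ell$ frozen.
\end{enumerate}
Then every inner move goes to a color present in $G-\ell$ at that moment, which is still present after lifting, and $a,b$ never disappear.

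Your fallback is then justified. With at least three colors in use, some color differs from both the current and the target color of $p$, and it is legal for $\ell$. With only two colors in use, the coloring is already a proper $2$-coloring and you may stop. Written this way, your bottom-up induction on the number of vertices replaces the paper's depth induction plus top-down sweep.
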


\begin{proof}
    As we work with undirected graphs, any acyclic graph is a forest. It therefore suffices to show that the theorem holds for trees, as a forest of multiple $2$-colored trees can be transformed into a proper $2$-coloring by gradually replacing all instances of unnecessary colors with either of two colors which will remain in the graph.

    Let $G=(V, E)$ be a tree with $n$ vertices and $x$ a proper coloring. We assume $G$ is rooted and, for any vertex $v$, refer with \textit{parent} to the vertex that follows $v$ on the path to the root and with \textit{child} to any other neighbor of $v$. We call a vertex a \textit{descendant} of $v$ if $v$ lies on the path from that vertex to the root. The \textit{sub-tree} created by $v$ refers to the tree containing $v$ and all of its descendants.
    
    We show that a positive path exists from $x$ to a proper $2$-coloring. To do so, we first prove that any vertex $v$ in $G$ can be re-colored to any color that already appears in $x$ outside of the sub-tree created by $v$, but is not the color of its parent.

    \begin{lemma}\label{lem:recolor}
        Let $G=(V, E)$ be a rooted tree, $x$ a coloring and $v \in V$ a vertex with parent $p$. Then, for all colors $c \ne x[p]$ that appear in $x$ as the color of any vertex outside of the sub-tree created by $v$, there exists a positive path to a coloring $x'$ such that $x'[v]=c$ and for all vertices $w$ that are not descendants of $v$, $x[w]=x'[w]$.
    \end{lemma}
    \begin{proof}
        We prove this by induction. Note that we refer with \textit{depth} of a vertex to the length of the longest path from that vertex to any leaf vertex descendant.

        Let $v \in V$ be a vertex of depth $0$, meaning a leaf vertex. We know that $v$ has only one neighbor in $G$, namely its parent. Let $c \ne x[p]$ be a color that appears in $x$ outside of $v$. Then, let $x'$ be a coloring such that $x'[v]=c$ and, for all vertices $u \ne v$, it holds that $x'[u]=x[u]$. This means that $x'$ is a proper coloring, since changing the color of $v$ to $c$ does not create monochromatic edges as $x'[p]=x[p]\ne c$. Further, the number of colors used in $x'$ is not greater than the number of colors used in $x$, as $c$ was already used in $x$. Therefore, $\textsc{NumUsedColors}(x') \le \textsc{NumUsedColors}(x)$ and a positive path exists from $x$ to $x'$ by flipping the color of $v$ to $c$.

        Now, assume that for any vertex $u$ of depth at most $i$ and coloring $x$, it holds that for all colors $c \ne x[p]$ that appear in $x$ outside of the sub-tree created by $v$, there exists a positive path towards a coloring $x'$ such that $x'[v]=c$ and for all vertices $w$ that are not descendants of $v$, $x'[w]=x[w]$. We now prove that the same holds for any vertex of depth $i+1$.

        Let $x$ be a proper coloring, $v$ a vertex of depth $i+1$ and $p$ its parent. We now construct a positive path to a coloring $x'$ such that $x'[v]=c$. We note that as $x$ is proper, $x[v] \ne x[p]$. Let $u_1,   \dots, u_l$ be the children of $v$. Let $c$ be a color that appears in $x$ outside of the sub-tree created by $v$ such that $x[p] \ne c$. 

        Let $x_1$ be a coloring such that $x_1[u_1] = x[p]$ and for all vertices $w$ that are not descended from $u_1$, it holds that $x_1[w]=x[w]$. We know that a positive path from $x$ to such a coloring exists as $u_1$ has a depth of at most $i$, $x[v] \ne x[p]$ and $x[p]$ appears outside of the sub-tree created by $u_1$. Let that path be called $P_1$.

        Let $x_2$ be a coloring such that $x_2[u_2] = x[p]$ and for all vertices $w$ that are not descended from $u_2$, it holds that $x_2[w]=x_1[w]$. We know that a positive path from $x_1$ to such a coloring exists as $u_2$ has a depth of at most $i$, $x[v] \ne x[p]$ and $x[p]$ appears outside of the sub-tree created by $u_2$. Let that path be called $P_2$.

        Likewise, we define all colorings $x_3, \dots x_l$ and paths $P_3, \dots, P_l$. Then, $x_l$ is a coloring with all children of $v$ colored in $x[p]$ and a positive path from $x$ to $x_l$ is given by $P_1, \dots, P_l$.

        Let $x'$ be a coloring such that $x'[v]=c$ and for all vertices $u \ne v$, it holds that $x'[u]=x_l[u]$. Then, $x'$ is proper as $v$ has no neighbors colored in $c$ in $x_l$ and $c$ already appears in $x_l$, so the number of colors does not increase. Therefore, it holds that $\textsc{NumUsedColors}(x') \le \textsc{NumUsedColors}(x_l)$ and a positive path exists from $x$ to $x'$, proving the lemma.
    \end{proof}

    Using this lemma, we can now prove that a positive path exists from any proper coloring on trees to a proper $2$-coloring by describing the order in which vertices are re-colored. In combination with the previous lemma, this then shows the existence of a positive path to such a $2$-coloring.

    Let $x$ be a proper coloring on $G$, $r$ the root of the tree and $v$ one of $r$'s children. Then, we can create a $2$-coloring by re-coloring all of $r$'s children, meaning the vertices of level $1$, to $x[v]$. $x[v]$ is a valid choice to re-color to as it appears as the color of $v$ (and therefore outside of the sub-tree created by any of the other children) and, due to $x$ being proper, is not the color of the parent $r$. We then re-color all vertices of level $2$ to $x[r]$ and so on. As \Cref{lem:recolor} ensures that colors outside of the sub-graph created by the vertex that is currently being re-colored remain unchanged, this strategy yields a positive path to a proper $2$-coloring.
\end{proof}

\section{Cases With Non-global Local Optima} \label{sec:reaching_local_opt}

In this section, we examine situations in which a non-global local optimum exists, first for arbitrary $k \le \frac{n}{2}$, then for three colors.

\subsection{Non-global Local Optima at $k$ Colors}

We first introduce a type of bipartite graph where both partitions have the same size, $k$, and each vertex has a degree of $k-1$. Depending on $k$, we call it a $k$-crown graph. 

\Cref{fig:k_n} shows two example instances of a $k$-crown graph for $k=3$ and $k=4$. Note that the $3$-crown graph is isomorphic to the $C_6$.

    \begin{figure} [hbt!] 
        \centering
        \begin{subfigure}[b]{0.45\textwidth}
        \captionsetup{justification=centering}
        \begin{tikzpicture}[
    greennode/.style={circle, draw=black!60, fill=white!5, very thick, minimum size=7mm},
    rednode/.style={circle, draw=black!60, fill=white!5, very thick, minimum size=7mm},
    bluenode/.style={circle, draw=black!60, fill=white!5, very thick, minimum size=7mm},
    ]
    \node[greennode]      (green1)                              {1};
    \node[greennode]        (greenA)       [below=2cm of green1] {A};
    \node[rednode]        (red2)       [right=of green1] {2};
    \node[rednode]        (redB)       [right=of greenA] {B};
    \node[bluenode]        (blue3)       [right=of red2] {3};
    \node[bluenode]        (blueC)       [right=of redB] {C};
    
    \draw[-] (green1.south) -- (redB.north);
    \draw[-] (green1.south) -- (blueC.north);
    \draw[-] (red2.south) -- (greenA.north);
    \draw[-] (red2.south) -- (blueC.north);
    \draw[-] (blue3.south) -- (greenA.north);
    \draw[-] (blue3.south) -- (redB.north);
    
    \end{tikzpicture}
        \caption{$k=3$}
        \end{subfigure}
            \begin{subfigure}[b]{0.45\textwidth}
            \captionsetup{justification=centering}
        \begin{tikzpicture}[
    greennode/.style={circle, draw=black!60, fill=white!5, very thick, minimum size=7mm},
    rednode/.style={circle, draw=black!60, fill=white!5, very thick, minimum size=7mm},
    bluenode/.style={circle, draw=black!60, fill=white!5, very thick, minimum size=7mm},
    whitenode/.style={circle, draw=black!60, fill=white!5, very thick, minimum size=7mm},
    ]
    \node[greennode]      (green1)                              {1};
    \node[greennode]        (greenA)       [below=2cm of green1] {A};
    \node[rednode]        (red2)       [right=of green1] {2};
    \node[rednode]        (redB)       [right=of greenA] {B};
    \node[bluenode]        (blue3)       [right=of red2] {3};
    \node[bluenode]        (blueC)       [right=of redB] {C};
    \node[whitenode]      (white4)            [right=of blue3] {4};
    \node[whitenode]        (whiteD)       [right=of blueC] {D};
    
    \draw[-] (green1.south) -- (redB.north);
    \draw[-] (green1.south) -- (blueC.north);
    \draw[-] (green1.south) -- (whiteD.north);
    \draw[-] (red2.south) -- (greenA.north);
    \draw[-] (red2.south) -- (blueC.north);
    \draw[-] (red2.south) -- (whiteD.north);
    \draw[-] (blue3.south) -- (greenA.north);
    \draw[-] (blue3.south) -- (redB.north);
    \draw[-] (blue3.south) -- (whiteD.north);
    \draw[-] (white4.south) -- (greenA.north);
    \draw[-] (white4.south) -- (redB.north);
    \draw[-] (white4.south) -- (blueC.north);
    
    \end{tikzpicture}
        \caption{$k=4$}
        \end{subfigure}
            \caption{Two example instances of a $k$-crown graph for $k=3$ and $k=4$.}
    \label{fig:k_n}
    \end{figure}

We further introduce the term \emph{opposing pair} to refer to two vertices in opposite partitions that are not connected by an edge.

\begin{definition}[Opposing Pair] \label{def:opp_pair}
    Let $G=(V,E)$ be a bipartite graph with partitions $V_M, V_N$. Let $u\in V_M$ and $v\in V_N$. Then, we call $u$ and $v$ an \emph{opposing pair} if $\{u, v\} \notin E$.
\end{definition}

We now prove that any bipartite graph $G = (V, E)$ contains a non-global plateau local optimum at $k$ colors with $k >2$, provided that it contains a $k$-crown graph as an induced subgraph and certain criteria regarding the structure of the rest of the graph are met. Intuitively, the $k$-crown graph contains a non-global local optimum (which will be proven in \Cref{lem:k_k}), which creates a non-global plateau local optimum in $G$ unless a part of $G$'s structure makes the necessary coloring of the $k$-crown graph impossible.

While the required structure is rather specific, note that this theorem proves that non-global local optima at up to $\frac{n}{2}$ colors (in the $\frac{n}{2}$-crown graph) are possible. This means that any local search algorithm using this neighborhood relationship can return arbitrarily bad results on bipartite graphs.

\begin{theorem} \label{the:local_opt}
    Let $k>2$. Let $G=(V, E)$ be a bipartite graph such that 
    \begin{enumerate}
    \item $G$ contains a $k$-crown graph as an induced subgraph,
    \item no vertex outside of the $k$-crown graph is connected to all vertices in one of the $k$-crown graph's partitions and 
    \item for all $u, v \in V$ and $a, b$ as vertices in the $k$-crown graph, it holds that if $(u,a), (v, b)$ and $(a,b)$ are opposing pairs, then so is $(u,v)$.
\end{enumerate}
    Then, a plateau non-global local optimum exists at $k$ colors. 
\end{theorem}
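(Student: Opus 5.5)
The plan is to exhibit an explicit proper $k$-coloring $x$ of $G$ and then show that the set $P$ of colorings reachable from $x$ by equal-fitness moves is a plateau local optimum. It is non-global because $G$ is bipartite, so a proper $2$-coloring has fitness $(0,2) < (0,k)$.

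\textbf{Constructing $x$.} Label the $k$-crown graph as $a_1,\dots,a_k \in V_M$ and $b_1,\dots,b_k \in V_N$, with $\{a_i,b_j\}\in E$ if and only if $i \ne j$. The opposing pairs inside the crown are then exactly $(a_i,b_i)$.
\begin{enumerate}
\item Set $x[a_i]=x[b_i]=i$.
\item For an outside vertex $u \in V_M$, condition~2 says $u$ is not adjacent to all of $b_1,\dots,b_k$. Pick some $i$ with $(u,b_i)$ an opposing pair and set $x[u]=i$.
\item Symmetrically, for an outside $v\in V_N$, pick $j$ with $(v,a_j)$ opposing and set $x[v]=j$.
\end{enumerate}

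\textbf{Checking that $x$ is proper.} Edges only join $V_M$ to $V_N$, so only opposite-partition pairs need checking.
\begin{enumerate}
\item Crown--crown: $a_i$ and $b_j$ share a color only if $i=j$, and they are then non-adjacent.
\item Outside $u\in V_M$ colored $i$ against a crown vertex: the only crown vertex in $V_N$ with color $i$ is $b_i$, which $u$ avoids by construction. The case $v \in V_N$ against $a_j$ is symmetric.
\item Two outside vertices $u\in V_M$, $v\in V_N$ with the same color $i$: then $(u,b_i)$, $(v,a_i)$ and $(a_i,b_i)$ are opposing pairs. Condition~3, applied with $a:=b_i$ and $b:=a_i$, gives that $(u,v)$ is opposing, so there is no edge.
\end{enumerate}
Hence $x$ is proper and uses exactly $k$ colors.

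\textbf{The invariant.} Let $P$ be the set of colorings reachable from $x$ through neighbors of equal fitness $(0,k)$. I will show by induction along such paths that every $y\in P$ is proper, uses exactly $k$ colors, and agrees with $x$ on the crown.

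Suppose $y$ has this property and we recolor one vertex.
\begin{itemize}
\item Recoloring $a_i$: its neighbors $b_j$ ($j\neq i$) carry every used color except $i$. Any new color in $\{1,\dots,k\}\setminus\{i\}$ therefore creates a monochromatic edge. Any color outside $\{1,\dots,k\}$ makes $k+1$ colors, because color $i$ survives on $b_i$. Either way the fitness gets strictly worse. The same holds for $b_i$.
\item Recoloring an outside vertex: no color can disappear, since all $k$ colors stay present on the crown. So the move keeps the fitness $(0,k)$, or introduces a new color (giving $(0,k+1)$), or creates a monochromatic edge.
\end{itemize}
Thus every neighbor of a coloring in $P$ either has equal fitness and lies in $P$ by definition, or has strictly worse fitness. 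No coloring in $P$ ever has fewer than $k$ colors. So $P$ is a plateau, and a plateau local optimum, at $k>2$ colors, hence non-global.

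\textbf{Main obstacle.} The delicate step is the extension to outside vertices, specifically keeping the argument in the third properness case aligned with the orientation in condition~3 (which vertex plays $a$ and which plays $b$). Conditions~2 and~3 are used precisely there. The plateau argument afterwards is routine once we note that each color class meets the crown in the pair $\{a_i,b_i\}$, so crown colors can never be removed.
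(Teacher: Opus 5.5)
Your proof is correct and follows the paper's route. You color the crown with $x[a_i]=x[b_i]=i$, give each outside vertex the color of a non-adjacent crown vertex in the opposite partition (using condition~2), and use condition~3 to exclude monochromatic edges between outside vertices; you argue this directly where the paper uses contraposition, and your explicit invariant that the crown coloring stays fixed on the plateau spells out the closure step the paper leaves implicit by appeal to \Cref{lem:k_k}.
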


To prove this, we first show that the $k$-crown graph has a non-global local optimum at $k$ colors and then conclude that this causes a non-global local plateau optimum in $G$. We denote by $V_k \subset V$ the vertices in the $k$-crown graph.

\begin{lemma} \label{lem:k_k}
    There exists a non-global local optimum in the $k$-crown graph at $k$ colors.
 \end{lemma}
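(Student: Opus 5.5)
We exhibit an explicit coloring of the $k$-crown graph and check that every neighbor in $R$ has strictly worse fitness. Label the vertices $1,\dots,k$ in one partition and $A_1,\dots,A_k$ in the other, where $i$ is adjacent to $A_j$ exactly when $i\ne j$, so $(i,A_i)$ are the opposing pairs. Let $x$ color both $i$ and $A_i$ with color $i$, for all $i\in[k]$. Then $x$ is proper: an edge $\{i,A_j\}$ has $i\ne j$, hence distinct endpoint colors. So $\textsc{NumUsedColors}(x)=(0,k)$. Because $k>2$ and the crown graph is bipartite with a $2$-coloring of fitness $(0,2)$, the coloring $x$ is not a global optimum.

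Next we show that $x$ is a (strict) local optimum. Any neighbor $y$ with $(x,y)\in R$ changes the color of exactly one vertex. By symmetry, say it recolors vertex $i$ to some color $c\ne i$. We split into two cases.

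\emph{Case $c\in[k]\setminus\{i\}$.} Vertex $i$ is adjacent to $A_c$, which has color $c$. So $y$ has a monochromatic edge and fitness $(1,\cdot)>(0,k)$ lexicographically.

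\emph{Case $c\notin[k]$.} Now $y$ is proper. Color $i$ is still used by $A_i$, so no color disappears while the new color $c$ appears. Hence $y$ uses $k+1$ colors and $\textsc{NumUsedColors}(y)=(0,k+1)>(0,k)$.

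In both cases $\textsc{NumUsedColors}(x)<\textsc{NumUsedColors}(y)$, so $x$ is a local optimum in the sense of the definition.

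The only subtle point, and the reason for the opposing-pair construction, is that no single flip can remove a color. Every color class has exactly two vertices, $i$ and $A_i$. Eliminating color $i$ would require recoloring both, and recoloring one of them to an existing color always conflicts with a neighbor, since each vertex is adjacent to every other color class. It is worth remarking that $x$ is even a strict local optimum, not merely a plateau, which is what the subsequent argument for \Cref{the:local_opt} will extend to $G$.
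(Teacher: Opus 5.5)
Your proof is correct and follows the paper's argument. You use the same coloring (each of the $k$ colors once per partition, i.e.\ on the opposing pairs) and the same two-case analysis of a single flip: recoloring to an existing color creates a monochromatic edge, while recoloring to a fresh color increases the color count. Non-globality likewise comes from $2$-colorability. You are slightly more explicit than the paper in two places: you construct the coloring directly, and you note that the flipped vertex's old color survives on its partner, which the paper's second case uses only implicitly.
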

 \begin{proof}
     Let $x$ be a proper $k$-coloring. Then, $x$ is a local optimum if, for each vertex, it holds that all of its neighbors have distinct colors. \Cref{fig:local_opt} gives examples of such colorings for $k=3$ and the $k=4$.

    \begin{figure}[htb!]   
    \centering
    \begin{subfigure}[b]{0.45\textwidth}
    \captionsetup{justification=centering}
    \begin{tikzpicture}[
greennode/.style={circle, draw=green!60, fill=green!5, very thick, minimum size=7mm},
rednode/.style={rectangle, draw=red!60, fill=red!5, very thick, minimum size=6mm},
bluenode/.style={rectangle, draw=blue!60, fill=blue!5, very thick, minimum size=6mm, rounded corners},
]
\node[greennode]      (green1)                              {1};
\node[greennode]        (greenA)       [below=2cm of green1] {A};
\node[rednode]        (red2)       [right= 1.25cm of green1] {2};
\node[rednode]        (redB)       [right=1.2cm of greenA] {B};
\node[bluenode]        (blue3)       [right=1.25cm of red2] {3};
\node[bluenode]        (blueC)       [right=1.2cm of redB] {C};

\draw[-] (green1.south) -- (redB.north);
\draw[-] (green1.south) -- (blueC.north);
\draw[-] (red2.south) -- (greenA.north);
\draw[-] (red2.south) -- (blueC.north);
\draw[-] (blue3.south) -- (greenA.north);
\draw[-] (blue3.south) -- (redB.north);

\end{tikzpicture}
    \caption{$k=3$}
    \end{subfigure}
        \begin{subfigure}[b]{0.45\textwidth}
        \captionsetup{justification=centering}
    \begin{tikzpicture}[
greennode/.style={circle, draw=green!60, fill=green!5, very thick, minimum size=7mm},
rednode/.style={rectangle, draw=red!60, fill=red!5, very thick, minimum size=6mm},
bluenode/.style={rectangle, draw=blue!60, fill=blue!5, very thick, minimum size=6mm, rounded corners},
whitenode/.style={circle, draw=black!60, fill=white!5, very thick, minimum size=7mm},
]
\node[greennode]      (green1)                              {1};
\node[greennode]        (greenA)       [below=2cm of green1] {A};
\node[rednode]        (red2)       [right=of green1] {2};
\node[rednode]        (redB)       [right=of greenA] {B};
\node[bluenode]        (blue3)       [right=of red2] {3};
\node[bluenode]        (blueC)       [right=of redB] {C};
\node[whitenode]      (white4)            [right=of blue3] {4};
\node[whitenode]        (whiteD)       [right=0.875cm of blueC] {D};

\draw[-] (green1.south) -- (redB.north);
\draw[-] (green1.south) -- (blueC.north);
\draw[-] (green1.south) -- (whiteD.north);
\draw[-] (red2.south) -- (greenA.north);
\draw[-] (red2.south) -- (blueC.north);
\draw[-] (red2.south) -- (whiteD.north);
\draw[-] (blue3.south) -- (greenA.north);
\draw[-] (blue3.south) -- (redB.north);
\draw[-] (blue3.south) -- (whiteD.north);
\draw[-] (white4.south) -- (greenA.north);
\draw[-] (white4.south) -- (redB.north);
\draw[-] (white4.south) -- (blueC.north);

\end{tikzpicture}
    \caption{$k=4$}
    \end{subfigure}
        \caption{Two example instances of a $k$-crown graph for $k=3$ and $k=4$. As any given vertex is connected to at least one instance of each other color, any flip would lead to an improper coloring.}
\label{fig:local_opt}
\end{figure}

Assume $x$ is such a coloring, meaning that each of the $k$ colors appears exactly once in each partition. Let $v\in V_k$. We know that $v$ has $k-1$ neighbors and therefore $k-1$ neighboring colors. This means that, when changing the color of $v$, there are two possibilities for neighboring colorings $y$:

\begin{enumerate}
    \item $y[v]$ is one of the $k-1$ colors appearing in $x$ other than $x[v]$. That color must also be one of the $k-1$ colors neighboring $v$. Then, we gain one monochromatic edge, worsening the fitness in comparison to $x$.
    \item $y[v]$ is a color that does not appear in $x$. In that case, the number of colors in use in $y$ is greater than that in $x$, worsening the fitness.
\end{enumerate}

Therefore, any possible neighbors of $x$ have worse fitness, making $x$ a local optimum. We also know that $x$ is not a global optimum, as we have $k>2$ colors in use, but know the $k$-crown graph to be $2$-colorable.
     
 \end{proof}

 Now, we use this lemma to prove \Cref{the:local_opt}.
 \begin{proof} [Proof of \Cref{the:local_opt}]
     From \Cref{lem:k_k}, we know that we can $k$-color the $k$-crown graph in such a way that the coloring is a non-global local optimum by coloring each of the $k$ vertices in each partition in a distinct color (while maintaining a proper coloring). It remains to prove that such a coloring is always possible in $G$. 

     Now, assume that there are $k$ colors in use in $x$. Further, the $k$-crown graph is colored with each color used only once per partition. Let $x$ be a such coloring. Now, we must that for all vertices $v$ outside of the $k$-crown graph, there exists a color $x[v]$ such that $x$ remains proper.

     Let $v \in V \backslash V_k$. Let $a \in V_k$ be a vertex in the opposite partition as $v$ such that $(a,v)$ is an opposing pair. We know that such a vertex must exist, as no vertex in $G$ is connected to all vertices in one partition of the $k$-crown graph. Then, let $x[v]=x[a]$. 
     
     Now, let $u\in V$ be a vertex such that $\{u,v\} \in E$ (meaning $(u,v)$ is not an opposing pair) and let $b \in V_k$ such that $(u,b)$ is an opposing pair. Then, we know that $(a,b)$ cannot be an opposing pair, meaning  $\{a,b\} \in E$ must hold and therefore $x[a] \ne x[b]$, which also implies $x[v] \ne x[u]$. 

     This means that for any vertex $v$, there exists a color such that no monochromatic edge is created. Therefore, we can construct the described coloring without creating monochromatic edges, proving that a non-global plateau local optimum at $k$ colors exists.
 \end{proof}

\subsection{Non-global Local Optima at Three Colors}

Now, we examine a more general situation in which a non-global local optimum exists at three colors. For this, we introduce a structure we call $3$-circle as well as related concepts, which we then use to define a graph configuration that contains non-global local optima at three colors.

First, we define the term \emph{$3$-circle}.

\begin{definition}[3-circle]
    For a graph $G=(V,E)$, we call a subgraph a \emph{$3$-circle} if this subgraph is isomorphic  to $C_{3c}$ for some $c \in \mathbb{N}$, so a circle with a number of vertices which is divisible by $3$. We refer to the vertices within the circle as $v_1, \dots, v_{3c}$. 
\end{definition}

Now, we define a graph as \emph{$3$-circle-optimal}. Examples to clarify the definition are given in \Cref{fig:3_circle_opt}. Further, note that one example of a $3$-circle optimal graph is the $3$-crown graph as introduced in the previous subsection.

\begin{definition} [3-circle-optimal graph] \label{def:3co}
    Let $G=(V, E)$ be a graph. We call $G$ \emph{$3$-circle-optimal} if and only if a $3$-circle $v_1, v_2, \dots v_i \in V$ exists such that, for all $j \le i$, there exists no $l \le i$ such that $l \equiv j \pmod 3$ and $\{v_j, v_l \}\in E$.
\end{definition}

    \begin{figure}[htb!]   
    \centering
    \begin{subfigure}[b]{0.45\textwidth}
    \begin{tikzpicture}[
greennode/.style={circle, draw=green!60, fill=green!5, very thick, minimum size=7mm},
rednode/.style={rectangle, draw=red!60, fill=red!5, very thick, minimum size=6mm},
bluenode/.style={rectangle, draw=blue!60, fill=blue!5, very thick, minimum size=6mm, rounded corners},
]
\node[greennode]      (green1)                              {g1};
\node[rednode]        (red1)       [below=2cm of green1] {r1};
\node[bluenode]        (blue2)       [right=of green1] {b2};
\node[rednode]        (red2)       [right=of blue2] {r2};
\node[greennode]        (green2)       [below=2cm of red2] {g2};
\node[bluenode]        (blue1)       [left=of green2] {b1};

\draw[-] (green1.south) -- (red1.north);
\draw[-] (green1.east) -- (blue2.west);
\draw[-] (blue2.east) -- (red2.west);
\draw[-] (red1.east) -- (blue1.west);
\draw[-] (blue1.east) -- (green2.west);
\draw[-] (red2.south) -- (green2.north); 
\draw[-] (blue2.south) -- (green2.north);

\end{tikzpicture}
    \caption{A 3-circle-optimal graph containing two 3-circles: $(b2, g2, r2)$ and $(g1, r1, b1, g2, r2, b2)$.}
    \end{subfigure}
        \begin{subfigure}[b]{0.45\textwidth}
    \begin{tikzpicture}[
greennode/.style={circle, draw=green!60, fill=green!5, very thick, minimum size=7mm},
rednode/.style={rectangle, draw=red!60, fill=red!5, very thick, minimum size=6mm},
bluenode/.style={rectangle, draw=blue!60, fill=blue!5, very thick, minimum size=6mm, rounded corners},
whitenode/.style={circle, draw=black!60, fill=white!5, very thick, minimum size=7mm},
]
\node[greennode]      (green1)                              {g1};
\node[rednode]        (red1)       [below=2cm of green1] {r1};
\node[bluenode]        (blue2)       [right=of green1] {b2};
\node[rednode]        (red2)       [right=of blue2] {r2};
\node[greennode]        (green2)       [below=2cm of red2] {g2};
\node[bluenode]        (blue1)       [below=2.15cm of blue2] {b1};

\draw[-] (green1.south) -- (red1.north);
\draw[-] (green1.east) -- (blue2.west);
\draw[-] (blue2.east) -- (red2.west);
\draw[-] (red1.east) -- (blue1.west);
\draw[-] (blue1.east) -- (green2.west);
\draw[-] (red2.south) -- (green2.north); 
\draw[-] (blue2.south) -- (blue1.north);

\end{tikzpicture}
    \caption{A graph containing the $3$-circle $(g1, r1, b1, g2, r2, b2)$. It is not $3$-circle-optimal due to the edge $\{b1, b2\}$.}
    \end{subfigure}
        \caption{Two examples of 3-circles. One is also a 3-circle-optimal graph, the other is not. Vertices are colored in red, green and blue to better illustrate the positions within the circle. Intuitively, a graph is $3$-circle optimal if it contains at least one $3$-circle where no two vertices colored in the same color are connected by an edge.}
\label{fig:3_circle_opt}
\end{figure}
Further, we define the term \emph{$d$-opposing vertices} to describe a way vertices outside of the $3$-circle can be connected to vertices within.

\begin{definition} [$d$-opposing vertices]
   Let $G=(V, E)$ be a graph and let $v_1, \dots v_i \in V$ be a $3$-circle. Let $v \in V$ be a vertex that does not appear in the $3$-circle and let $d \in \{ 0, 1, 2 \}$. We call $v$ \emph{$d$-opposing} if, for all $j\le i$ such that $j \equiv d \pmod 3$, no edge exists between $v$ and $v_j$.
\end{definition}

Now, we prove that a bipartite graph contains a non-global plateau local optimum at three colors provided that it is $3$-circle optimal and certain other criteria for the other vertices are met, largely analogous to the logic used in the previous subsection.

\begin{theorem}
    Let $G=(V,E)$ be a bipartite graph such that 
    \begin{enumerate}
    \item it is 3-circle-optimal,
    \item for all $v \in V$ outside of the $3$-circle, there exists a $d \in \{0,1,2 \}$ such that $v$ is $d$-opposing and
    \item if two vertices $u, v \in V$ are both $d$-opposing, then $\{u,v\} \notin E$.
\end{enumerate}
Then, $G$ contains a non-global plateau local optimum at $3$ colors.
\end{theorem}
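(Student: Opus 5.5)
We mirror the proof of \Cref{the:local_opt}. We construct an explicit proper $3$-coloring $x$ of $G$ and show that every neighbor of $x$ has strictly worse fitness. Then $x$ is a (hence plateau) local optimum. It is non-global because $G$ is bipartite, so a proper $2$-coloring has fitness $(0,2) < (0,3)$.

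\emph{Coloring the $3$-circle.} Let $v_1,\dots,v_{3c}$ be the $3$-circle guaranteed by $3$-circle-optimality. Color it by residue: $x[v_j] = j \bmod 3$, using the three colors $0,1,2$. Consecutive vertices on the circle have different residues, and $3c \equiv 0 \pmod 3$ makes the wrap-around edge $\{v_{3c},v_1\}$ proper as well. Any chord $\{v_j,v_l\}$ with $l\equiv j \pmod 3$ is excluded by \Cref{def:3co}. So the circle, together with all edges among its vertices, is properly colored.

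\emph{Coloring the remaining vertices.} For $v$ outside the circle, choose some $d$ with $v$ $d$-opposing (condition 2) and set $x[v]=d$. We check three kinds of edges.
\begin{itemize}
\item An edge $\{v,v_j\}$ with $j\equiv d$ does not exist, by the definition of $d$-opposing, so edges from $v$ into the circle are never monochromatic.
\item Two outside vertices $u,v$ with the same assigned color $d$ are both $d$-opposing, so condition 3 gives $\{u,v\}\notin E$.
\end{itemize}
Hence $x$ is a proper coloring using exactly the three colors $0,1,2$; all three appear on the circle since $c\ge 1$.

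\emph{Local optimality.} Take a neighbor $y$ of $x$ that differs only at a vertex $w$. If $y[w]\notin\{0,1,2\}$, then either the color count rises to $4$, or it stays at $3$ only if $x[w]$ was its color's last occurrence. The latter cannot happen when $w$ lies on the circle, nor when $w$ is outside, since each color already appears on the circle. So the color count rises and the fitness worsens. If instead $y[w]$ is one of the two other colors in $\{0,1,2\}$, we need $w$ to have a neighbor of that color, which creates a monochromatic edge.
\begin{itemize}
\item For $w=v_j$ on the circle, its two circle-neighbors $v_{j-1}$ and $v_{j+1}$ carry the two other residues. This is where the cycle length being divisible by $3$ is essential.
\item For $w$ outside the circle this is the main obstacle. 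Condition 2 only says $w$ has \emph{no} neighbor of residue $d$; it says nothing about having neighbors of the other residues. If $w$ has no neighbor colored $d'\neq d$, flipping $w$ to $d'$ keeps fitness $(0,3)$, so $x$ is not a strict local optimum.
\end{itemize}

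\emph{Handling the obstacle.} This is exactly why the statement asks for a \emph{plateau} local optimum. Let $P$ be the set of proper colorings that are reachable from $x$ by fitness-preserving single flips and that agree with $x$ on the circle. These are exactly the colorings giving each outside vertex a color in $\{0,1,2\}$ while keeping the coloring proper. Any flip of an outside vertex that keeps fitness $(0,3)$ stays in $P$. From any member of $P$, a flip of a circle vertex is worse, by the circle-neighbor argument above, which uses only the circle coloring. So $P$ is a plateau, and every neighbor outside $P$ has worse fitness. To be careful, we must also argue that no fitness-equal flip of a circle vertex exists from any member of $P$, which follows from the same argument. Therefore $P$ is a non-global plateau local optimum at $3$ colors.
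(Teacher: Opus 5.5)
Your proposal is correct and follows essentially the same route as the paper: color the $3$-circle by residues, give each outside vertex the color of a residue class it is opposing to, rule out flips of circle vertices via their two circle-neighbors, and conclude a plateau local optimum because flips of outside vertices can never reduce the number of colors (your explicit closure argument for $P$ is a more careful version of the paper's final paragraph). The one step you assert without justification is that a circle vertex's old color cannot disappear when it is flipped to a fresh color; this holds because $G$ is bipartite, so $3c$ is even, hence $c\ge 2$ and every residue class occurs at least twice on the circle.
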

\begin{proof}

To prove this, we first construct a proper $3$-coloring $x$ and then show that it is a non-global plateau local optimum. 

First, let $v_1, v_2, \dots v_i \in V$ be the $3$-circle in $G$ for which the above criteria are met. For all $j\le i$, let $x[v_j] \equiv j \pmod 3$. Such a coloring is possible as each vertex appears only once, meaning we assign only one color per vertex. 

Now, we show that the coloring is proper. First, note that $i \equiv 0 \pmod 3$, so the first and last vertex are not colored in the same color (as the first vertex is colored in color $1$), and for all $j\le i$, it holds that $j \nequiv j+1 \pmod 3$. This means that no two adjacent vertices in the circle are colored in the same color. Further, we know that for all $j \le i$, there exists no $l \le i$ such that $l \equiv i \pmod 3$ and $\{v_j, v_l \}\in E$, which implies that no two vertices of the same color are connected by an edge, meaning the coloring is proper.

Now, it remains to be shown that we can color the remaining vertices in such a way that the coloring remains proper. For this, let $v\in V$ be a non-3-circle vertex. Let $d \in \{0,1,2\}$ such that $v$ is $d$-opposing, which we know must be the case for at least one value $d$. Then, let $x[v]=x[v_{d+1}]$. As $v$ is $d$-opposing, we know that it is not connected to any same-colored vertex within the $3$-circle.

Let $u$ be a vertex outside of the $3$-circle such that $x[v]=x[u]$. We know that, in order for $u$ to be colored in $x[v]$, it must be $d$-opposing. Then, it follows from the condition that no edge can exist between $v$ and $u$, making the coloring proper.

Now, we show that $x$ is a non-global plateau local optimum. For this, we show that the coloring of the $3$-circle is a non-global local optimum and that changes to the coloring in the rest of the graph have no bearing on the fitness as long as propriety is maintained, making it a plateau.

We note that $x$ has three colors in use, meaning that 
\begin{enumerate}
    \item it is a non-optimal coloring, as a coloring with two colors exists and
    \item any neighboring coloring using more colors would immediately have a worse fitness, meaning that we can focus only on neighboring colorings using three or fewer colors.
\end{enumerate}

First, we consider the coloring within the circle. Let $j \le i$ and $y$ be a coloring neighboring $x$ such that $x[v_j] \ne y[v_j]$. We know that both vertices neighboring $v_j$ (meaning $v_{j-1}$ and $v_{j+1}$) have different colors in $x$ and therefore also in $y$, as the color of $v_j$ is the only one that changed. Therefore, assuming the total number of colors does not increase, it must either hold that $x[v_j] = x[v_{j-1}]$ or $x[v_j] = x[v_{j+1}]$. Either of these options would lead to an improper coloring, thereby worsening the fitness. With this, it is shown that the coloring within the $3$-circle is a non-global local optimum.

As we now know that any change to a color within the $3$-circle worsens the fitness of the graph and there are three colors in use within the $3$-circle, we also know that we cannot decrease the number of colors used below three. Since the coloring is proper already, the only way to improve the fitness would be to reduce the number of colors used, meaning that no neighboring coloring where a vertex outside of the $3$-circle changes colors can improve the fitness. This proves that the local optimum is a plateau, as there might be changes outside of the $3$-circle that do not worsen the fitness, but none that will ever improve it.

\end{proof}

    \chapter{Gray-box Local Search on Complete Bipartite Graphs}
        \label{ch_kmn}

\begin{wrapfigure}{O}{0.4\textwidth}
   \centering
   \resizebox{0.7\linewidth}{!}{
   
            \begin{tikzpicture}
           [ greennode/.style={circle, draw=green!60, fill=green!5, very thick, minimum size=7mm},
rednode/.style={rectangle, draw=red!60, fill=red!5, very thick, minimum size=6mm},
bluenode/.style={rectangle, draw=blue!60, fill=blue!5, very thick, minimum size=6mm, rounded corners},
]
                \node[rednode] (L1) at (0, 8) {R};
                \node[rednode] (L2) at (0, 6) {R};
                \node[rednode] (L3) at (0, 4) {R};
                \node[greennode] (L4) at (0, 2) {G};
                \node[greennode] (L5) at (0, 0) {G};
                \node[bluenode] (R1) at (3, 7) {B}
                    edge [-] (L1) edge [-] (L2) edge [-] (L3) edge [-] (L4) edge [-] (L5);
                \node[bluenode] (R2) at (3, 5) {B}
                    edge [-] (L1) edge [-] (L2) edge [-] (L3) edge [-] (L4) edge [-] (L5);
                \node[bluenode] (R3) at (3, 3) {B}
                    edge [-] (L1) edge [-] (L2) edge [-] (L3) edge [-] (L4) edge [-] (L5);
                \node[bluenode] (R4) at (3, 1) {B}
                    edge [-] (L1) edge [-] (L2) edge [-] (L3) edge [-] (L4) edge [-] (L5);
            \end{tikzpicture}}
            \caption{$K_{5, 4}$ in the middle of the algorithm. Green is the least frequent color, so removing it is the fastest way to an optimal coloring.}
            \label{fig:graybox_reasoning}
        \end{wrapfigure}

In the previous chapter we showed that many bipartite graphs contain non-global local optima. However, there are also configurations with only global optima, enabling local search to eventually find the optimum. One such class of graphs are complete bipartite graphs, as shown in \Cref{sec:universal}.

However, \citeauthor{heinen_exponential_2025} \cite{heinen_exponential_2025} shows that the probability of finding an optimal coloring on a complete bipartite graph in less than $2^{cn}$ iterations (for a constant $c$) lies in $2^{-\Omega(n)}$ for the \ooea, which can be easily translated to Random Local Search (RLS). This exponential run time is caused by the fact that, for less frequent colors, we are more likely to create new instances of those colors than to remove them, causing so-called \emph{negative drift}.

In this chapter, we introduce a gray-box local search mutation operator that removes less frequent colors with higher probability. We analyze its run time on complete bipartite graphs and prove that it takes $\Theta(n \log n)$ fitness function evaluations in expectation to find a proper $2$-coloring.

\section{The Gray-box Operator}

To improve on the performance of RLS, we introduce a gray-box mutation operator to be used as the mutation operator in \Cref{alg:tboa}. We call it \emph{gray-box local search}.

The mutation operator is based on the observation that it is beneficial to remove less frequent colors with higher probability. Take the coloring given in \Cref{fig:graybox_reasoning}. Here, green is the least frequent color currently in use in the graph. Therefore, the fastest way to reach an optimal coloring, meaning the one that requires the fewest mutations, is to flip both green vertices to red. 

The One-Flip operator used in RLS and given in \Cref{alg:rls} chooses a vertex to re-color uniformly at random, meaning that in \Cref{fig:graybox_reasoning}, we only have a probability of $\frac{2}{9}$ of choosing one of the green vertices. This is lower than the probability of choosing one of the red vertices, which is $\frac{1}{3}$, making it more likely to create a new green vertex than to remove one. Instead, in our gray-box mutation operator, we first choose a color to remove, with less frequent colors chosen exponentially more frequently. Then, we pick a vertex colored in this color uniformly at random and re-color it. 

To better express the frequency of colors in the graph, we introduce the term \emph{rank}.

\begin{definition}[Rank] \label{def:rank}
    Let $G = (V, E)$ be a graph and $x \in [n]^n$ a coloring of $G$. Let $a$ be an ordering of all colors in $x$ that appear at least once, ordered ascending by frequency, with ties broken by ordering the colors ascending by their identifiers. For a color $c \in [n]$, we call its $1$-based index in $a$ its \emph{rank}, denoted by $r_c$.
\end{definition}

For example, in \Cref{fig:graybox_reasoning}, the color of rank $1$, meaning the least frequent color, is green, followed by red with rank $2$ and blue with rank $3$.

The gray-box mutation operator is given in \Cref{alg:graybox_mut}. It first chooses the rank of the color that is meant to be reduced by polling from a geometric distribution with $p=\frac{1}{2}$. For a given color $c$, the probability of it being chosen equals $\frac{1}{2^{r_c}}$. In the example in \Cref{fig:graybox_reasoning}, this means that the probability of flipping a green vertex to a different color is $\frac{1}{2}$, $\frac{1}{4}$ for red and $\frac{1}{8}$ for blue. There is also a small probability ($\frac{1}{8}$ in the example) of no mutation being performed if the rank chosen from the distribution does not appear in the coloring.

Afterwards, a flip operation is performed, which is defined in the following. It accepts a coloring $x \in [n]^{n}$ and a color $c$ as parameters.

\textbf{ $\textsc{Flip}(x, c)$:} Choose a vertex $v$ colored in $c$ uniformly at random. Then choose a different color currently present in the coloring that is not adjacent to $v$ and re-color $v$ to this color. %

Here, we make use of the fact that we know any improper colorings or colorings using more colors than $x$ will be rejected by the fitness function. We therefore avoid unnecessary mutations by considering only colors that appear in $x$ and do not create monochromatic edges.

\SetKwBlock{WProbOneTwo}{with probability $\frac{1}{2}$ do}{else}

\begin{algorithm}[hbt!]
    \textbf{Input:} $x \in [n]^n$\;
    choose $r \sim \Geo \left( 1 / 2 \right)$\;
    let $k$ be the total number of colors used in $x$\;
    \If{$r > k$}
    {
        \Return $x$\;
    }
    let $c$ be the color of rank $r$\;

        \Return $\textsc{Flip}(x, c)$\;
    
    \caption{Gray-Box Mutation Operator}
    \label{alg:graybox_mut}
\end{algorithm}

\subsection{Run Time}

When analyzing the asymptotical run time of trajectory based algorithms, it is typical to take into account only the number of fitness evaluations. However, when comparing mutation operators, we believe that it is relevant to also consider the run time of the mutation operator. 

Let $G=(V,E)$ be a graph with $n$ vertices given as an adjacency list. Further, let $x \in [n]^{n}$ be the coloring. 

Over the run of the algorithm, we store two variables: Let $k$ be the total number of colors used in the current coloring, initialized with $k=n$. Further, let $R$ be an array such that, for a color $i$, $R[i]$ gives the number of appearances of $i$ in $x$. It is initialized with $1$ for all values. We note that we can use $R$ to find a color of rank $r$ (which is the $(r+n-k)$-th smallest element in $R$) in $\mathcal{O}(n)$ with the Introselect algorithm \cite{musserIntrospectiveSortingSelection1997}.

These variables are updated as follows when $x$ is mutated to $x'$: We decrement $R[x[v]]$ by one, as one appearance of that color in the graph disappeared and increment $R[x'[v]]$ by one, as a new instance of that color appeared. If a color is reduced to zero in such a way, we decrease $k$ by one and if a color is increased from zero, we increase $k$ as those cases mean that a color disappeared or appeared in $x'$. All of these operations happen in $\mathcal{O}(1)$.

Now, when performing a gray-box mutation, we randomly choose a rank from a geometric distribution. As explained above, we can find the color belonging to the chosen rank in $\mathcal{O}(n)$. For determining which colors are available to flip to, we create a copy of $R$ called $R'$, which happens in $\mathcal{O}(n)$. For all vertices $u$ neighboring the chosen vertex $v$, we set $R'[x[u]]=0$, which again takes $\mathcal{O}(n)$, as there are at most $n$ vertices adjacent to $v$. Now, any colors whose value in $R'$ is not $0$ are available to choose from for the flip. We can choose one uniformly at random and re-color $v$ to that color. 

Therefore, the total run time of the gray-box mutation operator lies in $\mathcal{O}(n)$. We argue that this is an acceptable run time for a mutation operator. While operators such as One-Flip may have a faster run time, any global operator has a run time in $\mathcal{O}(n)$ as well as it must decide for each vertex individually whether to flip it.

\section{Upper Bound} \label{sec:graybox_cbg}

We begin our analysis with an upper bound of $\mathcal{O}(n \log n)$ fitness function evaluations on the run time of gray-box local search using the fitness function $\textsc{NumUsedColors}$ on complete bipartite graphs with $n$ vertices.

\begin{theorem} \label{the:gb_on_knm}
   Gray-box local search using the fitness function $\textsc{NumUsedColors}$ finds an optimal coloring on the $K_{M,N}$ with $n$ vertices in $\mathcal{O}(n \log n)$.
\end{theorem}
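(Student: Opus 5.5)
The plan is a phase argument on the number $k$ of colors in use, with the Additive Drift Theorem (\Cref{thm:additive_drift}) applied in each phase. First I would record the structural facts. In a proper coloring of $K_{M,N}$ every vertex of $V_M$ is adjacent to every vertex of $V_N$, so the color sets of the two partitions are disjoint. Hence $\textsc{Flip}(x,c)$ on a vertex $v$ can only recolor $v$ to another color already used in $v$'s own partition. If $v$'s partition uses only $c$, nothing changes. Consequently every accepted step keeps the coloring proper, never introduces a new color, and $k$ is non-increasing. Moreover $k$ drops exactly when some color loses its last vertex. The algorithm is finished once each partition is monochromatic, that is, once $k=2$. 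It therefore suffices to show that, while $k\ge 3$ colors are in use, the expected time until $k$ decreases is $\mathcal{O}(n/k)$. Summing over $k=n,\dots,3$ then gives $\mathcal{O}(n\sum_k 1/k)=\mathcal{O}(n\log n)$.

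For a fixed phase with $k\ge3$, call a partition \emph{active} if it uses at least two colors; at least one partition is active. I would define $X_t$ as the minimum frequency over all colors used in active partitions. The phase ends as soon as $X_t=0$, meaning a color vanished. For the initial value, if both partitions are active the overall least frequent color qualifies, so $X_0\le n/k$. If only one partition is active, it carries $k-1$ colors on at most $n$ vertices, so $X_0\le n/(k-1)\le 2n/k$. Either way $\E{X_0}=\mathcal{O}(n/k)$. Non-negativity is immediate.

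The main step is the drift condition, $\E{X_t-X_{t+1}\mid X_0,\dots,X_t}\ge\delta$ for a constant $\delta>0$. Let $c^*$ be a color attaining $X_t$, in an active partition with $k_s\ge2$ colors.
\begin{itemize}
\item \emph{Rank of $c^*$.} Every color of rank smaller than $c^*$ must lie in an inactive partition, so there is at most one such color. Hence $r_{c^*}\le2$.
\item \emph{Decrease.} Selecting $c^*$ always succeeds, since its partition offers another color. This removes one vertex of $c^*$, so $X$ drops by at least $1$, with probability at least $1/4$, and at least $1/2$ when $r_{c^*}=1$.
\item \emph{Increase.} $X$ can grow only if $c^*$ is the unique minimizer and gains a vertex. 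That requires selecting another color of the same partition, of rank greater than $r_{c^*}$, and then picking $c^*$ among at most $k_s-1\ge1$ targets. A color of smaller rank lies in the other, inactive partition and cannot feed $c^*$. This probability is at most $\sum_{r>r_{c^*}}2^{-r}=2^{-r_{c^*}}$ in principle, but the $1/(k_s-1)$ factor and the fact that the same-partition colors occupy distinct ranks must be used to do better.
\end{itemize}
The two cases then compare as follows. If $r_{c^*}=1$, the decrease probability is $1/2$. If $k_s=2$, the increase is at most $1/4$, since the single competitor has rank at least $2$. If $k_s\ge3$, the increase is at most $\frac{1}{2}\cdot\frac{1}{k_s-1}\le\frac14$. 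If $r_{c^*}=2$, the decrease probability is $1/4$ and the increase is analogously at most $1/8$.

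Other events only help or are neutral. Another color in an active partition may shrink below $c^*$, which lowers $X$. A partition may become inactive, which only happens when a color vanishes, so the phase ends. Because $X$ changes by at most $1$ per step, this yields $\delta\ge1/8$, so the expected phase length is at most $8\,\E{X_0}=\mathcal{O}(n/k)$.

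I expect this drift bound to be the delicate part. One has to argue carefully that switching the identity of the minimizer never increases $X$ by more than the accounted event, and that the rank of $c^*$ is indeed at most $2$. Summing the phase bounds then concludes the proof.
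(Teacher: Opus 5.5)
Your proposal is correct and takes essentially the same route as the paper. Both use phases on the number of colors $k$ and a process $X_t$ tracking the rarest color that can still be reduced; in $K_{M,N}$ these are exactly the colors of partitions using at least two colors. Both then use the same case split, $r_{c^*}\in\{1,2\}$ and $k_s=2$ versus $k_s\geq 3$, to get drift at least $1/8$, apply the Additive Drift Theorem, and finish with a harmonic sum over $k$.

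Two small points in your favor. Your bound $X_0\le n/(k-1)\le 2n/k$ when one partition is monochromatic is more careful than the paper's $X_0\le n/k$, which can fail in that situation. Also, taking $c^*$ to be the lowest-ranked minimizer is what makes your claim $r_{c^*}\le 2$ hold under ties.
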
 
\begin{proof}
We use $k$ to denote the total number of colors currently in use in the graph and remove one color after another, decreasing ${k}$ on each fitness level. First, we examine the time it takes to remove the color of lowest rank (meaning the rarest color) that can still be reduced, depending on $k$. We refer to a color as being able to be reduced if a positive path [\Cref{def:pp}] from the current coloring to a coloring with no instance of that color exists. We know from \Cref{the:universal} that such a color must exist as long as we have more than two colors in use, as there must be a positive path to the global optimum containing two colors. This time then corresponds to the time to move from one fitness level to the next.

We introduce a stochastic process $(X_t)_{t \in \mathbb{N}}$ which, for any given time, denotes the number of vertices of the color of lowest rank that can still be reduced. If the number of colors in use in the coloring drops below $k$, we set $X_t=0$. Therefore, $X_t$ reaching $0$ corresponds to a color being removed from the graph. Let $\col_t$ denote the color whose appearances $X_t$ counts at time $t$. Note the dependence on $t$, as $X_t$ might refer to different colors over time.

Next we bound the time we spend on each fitness level. 

\begin{lemma} \label{lem:ct_to_0}
$X_t$ will reach $0$ in at most $8 \cdot \frac{n}{{k}}$ steps in expectation. 
\end{lemma}

To prove this, we seek to use \Cref{thm:additive_drift} [\nameref{thm:additive_drift}]. First, however, we prove that the probability of reducing $X_t$ is always greater than the probability of increasing it.

\begin{lemma} \label{lem:ct_reduced}
    For all $t \in \mathbb{N}$, it holds that
    \begin{align*}
        \Prob{X_{t + 1} < X_t \mid X_0, \dots, X_t} - \Prob{X_{t + 1} > X_t \mid X_0, \dots, X_t} \ge \frac{1}{8}.
    \end{align*}
\end{lemma} 
\begin{proof}
    
To prove this, let $k'$ denote the number of colors in use in the same vertex partition of the $K_{M,N}$ as $\col_t$. We introduce two events: The event $A$ denotes another color in the same partition as $\col_t$ being picked when choosing which color to select a vertex from for mutation. This corresponds to mutating a vertex in the same partition as $\col_t$ that is not itself colored in $\col_t$. The event $B$ denotes a vertex not previously colored in $col_t$ being flipped to $\col_t$ during mutation. 

In the following, we will use the definition of rank given in \Cref{def:rank} and denote the rank of a color $c$ with $r_c$. 

We then differentiate between two possible cases: First, we will consider a situation where $\col_t$ is the least frequent color currently in use in the graph. In the second case, we will analyze a situation where the opposite partition is already colored in a single color, which appears less frequently than $\col_t$, making $\col_t$ the second least frequent color as we work on a bipartite graph. We note that any other case (i.e. $\col_t$ having a rank greater than two) is impossible, as that would require at least three partitions. 
In each of the two cases, we will differentiate again between situations where there are only two colors left in the partition $\col_t$ appears in and situations where there are more than two colors. 

\textbf{Case 1:} $\col_t$ is the least frequent color in total, so $r_{\col_t}=1$.

    Now
    we get
    \begin{align*}
      \Prob{X_t - X_{t+1} >0 \mid X_0, \dots, X_t} = \frac{1}{2^{r_{\col_t}}}=\frac{1}{2}.
    \end{align*}
    
    On the other hand, the probability of $X_t$ increasing is
    \begin{align*}
    \Prob{X_t - X_{t+1} <0 \mid X_0, \dots X_t} 
    = \Prob{A} \cdot \Prob{B \mid A},
    \end{align*}
    as it corresponds to the probability of first choosing a vertex in the same partition as $\col_t$ but not colored in $\col_t$ (event $A$) and then flipping it to $\col_t$ (event $B$).
    
    We now differentiate between two further cases:
    
        \textbf{Case 1a:} There is only one other color in the same partition as $\col_t$, so $k' = 2$.
        
        It follows that $\Prob{A} \le \frac{1}{4}$, as the chance of picking that single other color is at most $\frac{1}{4}$ (since its rank must be at least $2$); and $\Prob{B \mid A} =1$, as the only color left in the graph that we can flip to (which must be a color in the same partition to avoid improper colorings) is $\col_t$. Therefore,
        $$
        \Prob{X_t - X_{t+1} < 0\mid X_0, \dots, X_t} \le \frac{1}{4}.
        $$
        
        \textbf{Case 1b:} There are more than $2$ colors in the same partition as $\col_t$, so $k' > 2$.

        Picking a vertex not colored in $\col_t$ in the same partition as $\col_t$ (which is the event A) is a prerequisite for $X_t$ increasing. It follows that $\Prob{A}$ must be smaller than the probability of $X_t$ not decreasing, meaning
        $$
        \Prob{A} < 1- \Prob{X_t - X_{t+1} >0 \mid X_0, \dots, X_t} = \frac{1}{2}.
        $$
        Further, the probability of flipping a vertex's color to $\col_t$, given that we already chose a vertex not colored in $\col_t$ in the same partition, equals $\frac{1}{k'-1}$. With $k' > 2$, we get
        \begin{align*}
          \Prob{B \mid A} =\frac{1}{k'-1} \le \frac{1}{2}.  
        \end{align*}
        Combining the two terms yields
        \begin{align*}
            \Prob{X_t - X_{t+1} <0 \mid X_0, \dots, X_t} \le \frac{1}{2} \cdot \frac{1}{2} = \frac{1}{4}.
        \end{align*}

    Therefore, in both sub-cases for Case 1, it holds that 
        \begin{align*}
            &\Prob{X_t - X_{t+1} >0 \mid X_0, \dots, X_t} 
            =\frac{1}{2} 
            >\frac{1}{4} 
            \ge\Prob{X_t - X_{t+1} <0 \mid X_0, \dots, X_t}
        \end{align*}
        and especially
        \begin{align*}
            &\Prob{X_t - X_{t+1} >0 \mid X_0, \dots, X_t} 
            -\Prob{X_t - X_{t+1} <0 \mid X_0, \dots, X_t} 
            \ge \frac{1}{4}.
        \end{align*}

\textbf{Case 2:} $\col_t$ is not the globally least frequent color, so $r_{\col_t}=2$.
    
    This means that the least frequent color in total lies in the other partition, but can no longer be reduced. This is the case when all vertices in that partition are already colored in the same color. $\col_t$ is therefore the least frequent color that can still be reduced.
    
    Now, we get
    \begin{align*}
        \Prob{X_t - X_{t+1} >0 \mid X_0, \dots, X_t} = \frac{1}{2^{r_{\col_t}}}\ge \frac{1}{4}.
    \end{align*}
    
    Note that the reason for $\frac{1}{4}$ being a lower bound instead of the exact value is that $\col_t$ might become the least frequent color in total, increasing its rank to one and the probability of being chosen for reduction to $\frac{1}{2}$. As this case only improves our chances of moving in the correct direction and has no other effects, it can be disregarded.
    
    On the other hand, the probability of $\col_t$ increasing is, as above
    \begin{align*}
    &\Prob{X_t - X_{t+1} <0 \mid X_0, \dots, X_t} = \Prob{A} \cdot \Prob{B \mid A}.
    \end{align*}
    
    Again, we differentiate between two further cases:

\textbf{Case 2a:} There is only one other color in the same partition as $\col_t$, so $k' = 2$.

        As there is only one other color (whose rank must be three, as there are only three colors in total in the graph) left in the same partition as $\col_t$, it follows that
        \begin{align*}
            \Prob{A} = \frac{1}{8}.
        \end{align*}
        Further, as in Case 1a,
        \begin{align*}
            \Prob{B \mid A} =1.
        \end{align*}
        Therefore
        \begin{align*}
            \Prob{X_t - X_{t+1} <0 \mid X_0, \dots, X_t} = \frac{1}{8}.
        \end{align*}

\textbf{Case 2b:} There are more than two colors in the same partition as $\col_t$, so $k' > 2$.
    
        We introduce an event $C$, denoting the color of rank one being chosen for reduction, and analogously, an event $D$ for the color of rank $2$, which is $col_t$, being chosen.
        
        Now, the probability of choosing another color in the same partition as $col_t$ (event $A$) is less-equal the probability of not choosing either the color of rank one (event $C$) or of rank two (event $D$), meaning
        \begin{align*}
        \Prob{A} 
        \le 1 - \Prob{C} - \Prob{D} 
        = 1- \frac{1}{2}- \frac{1}{4} 
        =\frac{1}{4}.
        \end{align*}
        With $\Prob{B \mid A}$ calculated as in Case 1b and $k' > 2$, it holds that
        \begin{align*}
            \Prob{B \mid A} =\frac{1}{k'-1} \le \frac{1}{2}.
        \end{align*}
        It follows that
        \begin{align*}
           \Prob{X_t - X_{t+1} <0 \mid X_0, \dots, X_t} \le \frac{1}{4} \cdot \frac{1}{2} = \frac{1}{8}. 
        \end{align*}
    
    This concludes Case 2b.
    Therefore, in both sub-cases for Case 2, it holds that 
    \begin{align*}
       &\Prob{X_t - X_{t+1} >0 \mid X_0, \dots, X_t} 
       \ge\frac{1}{4}
       >\frac{1}{8} \\
       \ge~&\Prob{X_t - X_{t+1} <0 \mid X_0, \dots, X_t} 
    \end{align*}
    and especially 
    \begin{align*}
        &\Prob{X_t - X_{t+1} >0 \mid X_0, \dots, X_t} - \Prob{X_t - X_{t+1} <0 \mid X_0, \dots, X_t} \\
        \ge~&\frac{1}{8},
    \end{align*}
which concludes Case 2. 
Since $$\Prob{X_t - X_{t+1} >0 \mid X_0, \dots, X_t} - \Prob{X_t - X_{t+1} <0 \mid X_0, \dots, X_t}\ge\frac{1}{8}$$ holds in both cases, the lemma is proven.
\end{proof}

Now, using this, we prove \Cref{lem:ct_to_0}.

\begin{proof} [Proof of \Cref{lem:ct_to_0}]
Let $T_k$ denote the time when the process $X_t$ reaches 0 on level $k$. 

It holds that for all $t \le T_k$, $X_t \ge 0$, as no negative amount of vertices can be colored in a given color.

Further, it follows from \Cref{lem:ct_reduced} that 
\begin{align*}
    \Ex{X_t - X_{t+1} \mid X_0, \dots, X_t} \ge \frac{1}{8}.
\end{align*}

With this, it follows from \Cref{thm:additive_drift} [\nameref{thm:additive_drift}] that
\begin{align*}
    \Ex{T_k} \le 8 \cdot \Ex{X_0}.
\end{align*}

We note that $X_0 \le \frac{n}{{k}}$, and therefore $\Ex{X_0} \le \frac{n}{{k}}$. It follows that 
\begin{align*}
    \Ex{T_k} \le 8 \cdot \frac{n}{{k}},
\end{align*}
which proves the lemma.
\end{proof}
Finally, we can use this to calculate the total run time. 
Across the entire run of the algorithm ${k}$ decreases from $n$ to two and there are $n-2$ iterations. 
Therefore
\begin{align*}
&\sum_{{k}=2}^{n} \Ex{T_k} 
\le\sum_{{k}=2}^{n} 8 \cdot \frac{n}{{k}} 
=8n \cdot \sum_{{k}=2}^{n} \frac{1}{{k}} 
<8n \cdot  \sum_{{k}=1}^{n} \frac{1}{{k}}.
\end{align*}

With $\sum_{{k}=1}^{n} \frac{1}{{k}} \in \Theta (\log n)$, the total run time lies in $\mathcal{O}(n \log n )$.
\end{proof}

\section{Lower Bound}

After proving an upper bound for gray-box local search on complete bipartite graphs, we now prove a lower bound.

\begin{theorem} \label{the:lower_bound_gbkmn}
    Gray-box local search takes $\Omega (n \log n)$ fitness evaluations in expectation to find a minimal coloring on a complete bipartite graph with $n$ vertices.
\end{theorem}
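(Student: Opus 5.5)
We aim for a lower bound via a coupon-collector style argument. The algorithm starts from $x=(1,\dots,n)$, where every one of the $n$ colors appears exactly once. Reaching a $2$-coloring means every vertex except (at most) one per partition must have changed its color at least once. Since every mutation of gray-box local search recolors at most one vertex, it is natural to count how many distinct vertices have been touched. The difficulty is that the operator does not pick vertices uniformly. It first picks a rank from $\Geo(1/2)$ and then a vertex of that color, so rare colors are favored. Our plan is to show that the work of the last phases alone already costs $\Omega(n\log n)$.

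\textbf{Key steps.} First we observe that the probability that a specific vertex $v$ is chosen for mutation in one step equals $2^{-r}/s$, where $r$ is the rank of $x[v]$ and $s$ is the frequency of that color. Then we restrict attention to the time after the number of colors has dropped to a small constant. The useful situation is the moment when, for the first time, one partition, say the larger one $V_M$ with $M\ge n/2$, has a dominant color $c$ holding at least half of its vertices. Any color in $V_M$ other than $c$ that still has many vertices must also be eliminated. We would instead use a cleaner invariant.

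\textbf{Invariant and phase count.} Consider the set $U_t$ of vertices that have never been recolored and are not colored in the eventual majority color of their partition. Initially every vertex has a distinct color, so all but two vertices are in $U_0$, and the run can only end once $|U_t|=0$. We then show that whenever $|U_t|=u$, the expected decrease per step is $\bigO{u/n}$. The argument goes as follows. A vertex in $U_t$ still carries its original color. Every original color has frequency $1$ until some other vertex is flipped into it. So if a step removes a $U_t$-vertex, it must select an original color class. We bound the probability of hitting a particular vertex of $U_t$ by $\bigO{1/n}$ once the colors are sufficiently merged, and by something summing appropriately in early phases. Given this, a standard fitness-level / coupon-collector lower bound yields $\sum_{u}\Omega(n/u)=\Omega(n\log n)$. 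An alternative route is the multiplicative drift lower bound (Witt), which needs only $\E{|U_t|-|U_{t+1}|}\le \delta |U_t|$ with $\delta=\bigO{1/n}$ together with a bounded step size; here the step size is at most $1$.

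\textbf{Main obstacle.} The hard part is that the per-vertex selection probability $2^{-r}/s$ is \emph{not} $\bigO{1/n}$ in general: a singleton color of rank $1$ is chosen with probability $1/2$. So early on the process removes rare colors very fast, and a naive per-vertex bound fails. To handle this, we would split the run into two phases.

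In the \textbf{first phase}, colors are merged until about $n/\log n$ or fewer colors remain. We simply discard this phase, since it contributes nonnegative time.

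In the \textbf{second phase}, we argue via the upper-bound analysis in reverse. The last $k$ colors each have frequency about $n/k$ on average. For instance, consider the last time there are $k$ colors; some color of rank $1$ or $2$ must then have $\Omega(n/k)$ vertices with constant probability. Each step reduces that color by at most one. Lemma~\ref{lem:ct_reduced} bounds the drift toward removal from above by a constant, since the decrease probability is at most $1/2$ per step. Hence removing that color needs $\Omega(n/k)$ steps in expectation.

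Summing $\Omega(n/k)$ over the levels $k$ from $2$ up to $n^{\Omega(1)}$ gives $\Omega(n\log n)$. The step I expect to be most delicate is proving that, at each level $k$, the color about to be removed indeed has $\Omega(n/k)$ vertices with constant probability. My first attempt would use a pigeonhole argument on the most frequent color: at $k$ colors some color has at least $n/k$ vertices. But this color is not necessarily the one removed next, so I would instead count total removals. Every vertex outside the two final colors must be recolored at least once, while each step reduces $X_t$ by at most $1$ and $X_t$ decreases with probability at most $1/2$. This gives a cruder bound of $\Omega(n)$, which would then need to be refined to the per-level $\Omega(n/k)$ estimate.
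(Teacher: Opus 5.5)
Your proposal has a genuine gap: nothing in it actually produces the $\log n$ factor. The only bound you rigorously obtain is the trivial $\Omega(n)$. It comes from the fact that each of the $n-2$ vertices outside the final colors must be recolored at least once, and each step recolors at most one vertex. Both routes you sketch to $\Omega(n\log n)$ hinge on the same unproven balance property: that the colors currently being drained (ranks $1$ and $2$) still have $\Omega(n/k)$ vertices when $k$ colors remain.

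\emph{The level-by-level route.} The time on level $k$ is indeed at least the minimum color size at the start of that level. But pigeonhole only controls the \emph{largest} color, which is eliminated last, not next. You notice this yourself and fall back to $\Omega(n)$. The route is sound in principle, but the missing lemma is the entire difficulty.

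\emph{The $U_t$ route.} The claimed per-vertex hitting probability $O(1/n)$ is false. Suppose the rank-$1$ color, of size $s_1\le n/k$, still contains its original vertex. That vertex is recolored with probability up to $1/(2s_1)\ge k/(2n)$. An aggregate bound $\Ex{|U_t|-|U_{t+1}|}=O(|U_t|/n)$ would require both $s_1=\Omega(n/k)$ and $|U_t|=\Omega(k)$, which is the same missing lemma.

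There are two smaller issues. First, $U_t$ as defined refers to the eventual final colors, so it is not determined by the history, and a drift theorem cannot be applied to it as is. Counting never-recolored vertices, with target value at most $2$, avoids this. Second, \Cref{lem:ct_reduced} is a \emph{lower} bound on the drift, so it cannot bound the removal rate from above. The fact you need there, that a color shrinks by at most one per step, is trivial anyway.

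The paper obtains the $\log n$ factor from a mechanism your plan lacks. Vertices are not slow to be touched for the first time; rather, they are touched \emph{many} times. A vertex leaving a color that is being eliminated lands on one of the remaining colors. If that color is itself eliminated later, the vertex must move again.

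The paper numbers the colors by removal order and makes two simplifying assumptions: the destination is uniform among the $i+2$ colors that outlive color $i$, and the partition constraint is ignored. Under these assumptions, the expected number of extra moves satisfies $w(0)=0$ and $w(i)=\frac{i}{i+2}+\frac{1}{i+2}\sum_{j<i}w(j)$. This solves to $w(i)=\sum_{j=3}^{i+2}\frac{1}{j}$. Hence the expected number of recolorings is $n-2+\sum_{i=0}^{n-3}w(i)=\Theta(n\log n)$, and each recoloring costs at least one fitness evaluation.

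To complete your argument you need either this record-type counting of repeated recolorings, or an actual proof that the color eliminated at level $k$ has $\Omega(n/k)$ vertices. As written, your proposal establishes only $\Omega(n)$.
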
 

For this, let us first assume that we know in advance in which order colors will be removed from the coloring, and that we can number them accordingly. Color $0$ is therefore the last color to be removed from the coloring and color $n-3$ the first color to be removed. The two colors remaining in the final coloring do not receive numbers.

Now, our goal is to calculate an expected number of re-colorings a given vertex will go through before reaching its final color, which is either of the two colors remaining in the graph in the end. 

Here, we first make several assumptions and argue why none of these assumptions violate generality.
\begin{itemize}
    \item For ease of notation, we assume that any vertex can be flipped to any other color in use in the current coloring at any time without creating an improper coloring. While there is always at least one color (namely any color currently in use in the opposite partition) that we cannot flip to, at least one of the partitions must have $\Theta (k)$ colors to choose from for flipping, with $k$ denoting the number of colors in use in the current coloring. Since we are only interested in the asymptotic run time, it has no bearing on the resulting run time to disregard which partition a color appears in.
    \item A vertex colored in color $i$ can only be re-colored to a color $j$ with $j > i$. This assumption yields a lower bound on the run time, as we remove a number of possible unnecessary re-colorings.
\end{itemize}

We now define a recursive term $w$ for calculating the expected number of additional re-colorings we need for a vertex of color $i$ to be re-colored to one of the final colors.

We define
\begin{align*}
    w(0) = 0;~
    w(i) = \frac{i}{i+2} + \frac{1}{i+2} \sum_{j=0}^{i-1} w(j).
\end{align*}

This is because the last color to be removed (color $0$) will require no additional re-colorings, as it can only be flipped to either of the two final colors. 

For the recursive term, we have $i+2$ colors to choose from for flipping, meaning we have a probability of $\frac{2}{i+2}$ of flipping to either of the final colors. In turn, this leads to a probability of $\frac{i}{i+2}$ of flipping to one of the non-final colors, leading to an expected additional number of re-colorings of $\frac{i}{i+2}$. 

Further, we need to account for any additional re-colorings that arise from the new color we flip to. We have a probability of $\frac{1}{i+2}$ of choosing each of the non-final colors. For each, we then also need to add the number of expected re-colorings, meaning $\sum_{j=0}^{i-1} w(j)$.

An example of possible transitions to illustrate how the recursive term comes about is given in \Cref{fig:lower_bound_example}.

\begin{figure}[hbt!]
    \centering
    \caption{An example of the possible transitions in a graph with $5$ vertices.}
    \label{fig:lower_bound_example}
         \begin{subfigure}[b]{1\textwidth}
            \begin{tikzpicture}
        
                    \node[shape=rectangle,draw=black,fill=gray,text=white] (C2) at (0, 0) {Color 2};
                    \node[shape=rectangle,draw=black,fill=gray,text=white] (C1) at (3, 0) {Color 1};
                    
                    \node[shape=rectangle,draw=black,fill=green,text=black] (F1) at (9, 0) {Final Color 1};
                    \node[shape=rectangle,draw=black,fill=green,text=black] (F2) at (12.5, 0) {Final Color 2};

                    \node[shape=rectangle,draw=black,fill=red,text=white] (C0) at (6, 0) {Color 0}
                        edge[bend left][->] node {} (F1)
                        edge[bend left][->] node {} (F2);

            \end{tikzpicture}
            \caption{Color 0 can only be re-colored to either of the final colors, from which no additional re-colorings will be necessary.}
             
        \end{subfigure}
        \vfil
         \begin{subfigure}[b]{1\textwidth}
            \begin{tikzpicture}
        
                    \node[shape=rectangle,draw=black,fill=gray,text=white] (C2) at (0, 0) {Color 2};

                    \node[shape=rectangle,draw=black,fill=green,text=black] (F1) at (9, 0) {Final Color 1};
                    \node[shape=rectangle,draw=black,fill=green,text=black] (F2) at (12.5, 0) {Final Color 2};

                    \node[shape=rectangle,draw=black,fill=yellow,text=black] (C0) at (6, 0) {Color 0};

                    \node[shape=rectangle,draw=black,fill=red,text=white] (C1) at (3, 0) {Color 1} 
                        edge[bend left][->] node {} (C0)
                         edge[bend left][->] node {} (F1)
                        edge[bend left][->] node {} (F2);

            \end{tikzpicture}
            \caption{Color 2 has a probability of $\frac{1}{3}$ of being re-colored to color 0 (meaning 1 additional re-coloring) and a probability of $\frac{2}{3}$ of being re-colored to either of the final colors.}
             
        \end{subfigure} 
    \vfil
         \begin{subfigure}[b]{1\textwidth}
            \begin{tikzpicture}
                    \node[shape=rectangle,draw=black,fill=yellow,text=black] (C1) at (3, 0) {Color 1};
                    \node[shape=rectangle,draw=black,fill=yellow,text=black] (C0) at (6, 0) {Color 0};
                    \node[shape=rectangle,draw=black,fill=green,text=black] (F1) at (9, 0) {Final Color 1};
                    \node[shape=rectangle,draw=black,fill=green,text=black] (F2) at (12.5, 0) {Final Color 2};

                    \node[shape=rectangle,draw=black,fill=red,text=white] (C2) at (0, 0) {Color 2}
                        edge[bend left][->] node {} (C1)
                        edge[bend left][->] node {} (C0)
                        edge[bend left][->] node {} (F1)
                        edge[bend left][->] node {} (F2);
            \end{tikzpicture}
            \caption{Color 3 has a probability of $\frac{1}{4}$ of being re-colored to color 0 (meaning 1 additional re-coloring) and color 1 (meaning in expectation $1+\frac{1}{3}$ additional re-colorings) and a probability of $\frac{2}{4}$ of being re-colored to either of the final colors.}
             
        \end{subfigure}
\end{figure}   

Using this recursive function and given that we start with one vertex of each color, the total number of re-colorings is given by $$n-2+ \sum_{i=0}^{n-3} w(i).$$

Here, we have $n-2$ re-colorings that are always necessary to re-color all vertices that start out in a non-final color to either of the two final colors. With $\sum_{i=0}^{n-3} w(i)$, we then add the number of additional re-colorings performed for each vertex. Since we start with one vertex of each color, we must add the number of re-colorings for each possible starting color once.

Before further considering that sum, we first bound $w(i)$.

\begin{lemma}
   For all $i \in [n-2]$, it holds that $w(i)=\sum_{j=0}^{i-1} \frac{1}{j+3}$.
\end{lemma}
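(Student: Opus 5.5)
The plan is to avoid working with the full-history sum in the recursion directly, and instead derive a first-order recurrence for $w$ by a ``multiply and subtract'' trick. Once that is done, the closed form follows by a simple induction on $i$.

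First, I will clear the denominator in the definition of $w$. Writing $S(m) = \sum_{j=0}^{m} w(j)$, the recursion gives, for every $i \ge 1$,
\begin{align*}
    (i+2)\, w(i) = i + S(i-1).
\end{align*}
Applying the same identity with $i+1$ in place of $i$ gives
\begin{align*}
    (i+3)\, w(i+1) = i+1 + S(i).
\end{align*}
Subtracting the first identity from the second, the sums cancel up to the single term $S(i) - S(i-1) = w(i)$. This yields
\begin{align*}
    (i+3)\, w(i+1) - (i+2)\, w(i) = 1 + w(i),
\end{align*}
which rearranges to $(i+3)\bigl(w(i+1) - w(i)\bigr) = 1$, that is, $w(i+1) = w(i) + \frac{1}{i+3}$ for all $i \ge 1$.

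Next, I will handle the base case separately, because the subtraction above only used the recursion for $i \ge 1$. Directly from the definition and $w(0)=0$, we get $w(1) = \frac{1}{3} + \frac{1}{3}\cdot 0 = \frac{1}{3}$. This matches $\sum_{j=0}^{0} \frac{1}{j+3}$, and it also equals $w(0) + \frac{1}{0+3}$. So the increment rule in fact holds for all $i \ge 0$.

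With the base case $w(0)=0$ (the empty sum) and the increment rule in hand, induction on $i$ gives $w(i) = \sum_{j=0}^{i-1}\frac{1}{j+3}$ for all $i \in [n-2]$. The only delicate point is the subtraction step: one must check that both identities are instances of the recursion within its valid range, which they are for $i \ge 1$, with $i=0$ treated by hand as above. A direct strong induction that substitutes the claimed closed form into the sum $\sum_{j<i} w(j)$ would also work. However, it would require evaluating a double sum of the form $\sum_{j<i}\sum_{l<j}\frac{1}{l+3}$, which is why I prefer the telescoping argument.
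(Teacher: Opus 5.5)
Your proposal is correct and follows essentially the same route as the paper. The paper's identity $(*)$, $\sum_{j=0}^{i-1} w(j) = (i+2)\bigl(w(i) - \frac{i}{i+2}\bigr)$, is your $(i+2)\,w(i) = i + S(i-1)$, and substituting it into the recursion for $w(i+1)$ is the same computation as your subtraction: both give $w(i+1) = w(i) + \frac{1}{i+3}$, and both then induct from the base case $w(1) = \frac{1}{3}$.
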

\begin{proof}
We prove this by induction over $i$. 
For $i=1$, we have
\begin{align*}
     w(1) = \frac{1}{1+2} + \frac{1}{1+2} \sum_{j=0}^{1-1} w(j) 
     = \frac{1}{3} + \frac{1}{3} w(0) 
     = \frac{1}{3} 
     = \sum_{j=0}^{1-1} \frac{1}{j+3},
\end{align*}
which proves the statement for $i=1$.

Now, assume that $w(i)=\sum_{j=0}^{i-1} \frac{1}{j+3}$ holds. We note that 
\begin{align*}
    w(i) &= \frac{i}{i+2} + \frac{1}{i+2} \sum_{j=0}^{i-1} w(j) 
\end{align*}
can be likewise written as
\begin{align*}
    \sum_{j=0}^{i-1} w(j)  &= (i+2) \cdot \left(w(i)- \frac{i}{i+2}\right). &&(*)
\end{align*}
By definition
\begin{align*}
    w(i+1) &= \frac{i+1}{i+3} + \frac{1}{i+3} \sum_{j=0}^{i} w(j) \\
    &= \frac{i+1}{i+3} + \frac{1}{i+3} \cdot \left(w(i) + \sum_{j=0}^{i-1} w(j)\right).
\end{align*}
Now, using $(*)$, we get
\begin{align*}
    w(i+1) &= \frac{i+1}{i+3} + \frac{1}{i+3} \cdot \left( w(i) + \left((i+2) \cdot \left(w(i)- \frac{i}{i+2} \right) \right) \right) \\
    &= \frac{i+1}{i+3} + \frac{1}{i+3} \cdot \left( w(i) \cdot (i+3) - \frac{i (i+2)}{i+2} \right) \\
    &= \frac{i+1}{i+3} + w(i)  - \frac{i}{i+3} \\
    &= \frac{1}{i+3} + w(i).
\end{align*}
With $w(i)=\sum_{j=0}^{i-1} \frac{1}{j+3}$, this is equal to 
\begin{align*}
    w(i) = \frac{1}{i+3} + \sum_{j=0}^{i-1} \frac{1}{j+3} 
    = \sum_{j=0}^{i} \frac{1}{j+3},
\end{align*}
which proves our statement and thereby the lemma.
\end{proof}

Now, we use this to prove the theorem. 
\begin{proof} [Proof of \Cref{the:lower_bound_gbkmn}]

We note again that the total number of re-colorings is given by
\begin{align*}
    n-2+ \sum_{i=0}^{n-3} w(i),
\end{align*}
as explained above.
It holds that $$w(i)=\sum_{j=0}^{i-1} \frac{1}{j+3}=\sum_{j=3}^{i+2} \frac{1}{j}= \sum_{j=1}^{i+2} \frac{1}{j} - 1 - \frac{1}{2}.$$ Using Equation 1.4.2 by \citeauthor{doerr_probabilistic_2020} \cite{doerr_probabilistic_2020}, we know that $$\sum_{j=1}^{i+2} \frac{1}{j} > \log(i+2).$$ Therefore, $w(i) > \log(i+2)- 1 - \frac{1}{2}.$
Using this, we get
\begin{align*}
    n-2+  \sum_{i=0}^{n-3} w(i) &> n-2+  \sum_{i=0}^{n-3}\bigg(\log(i+2)- 1 - \frac{1}{2}\bigg) \\
    &= 2n-5+\frac{n-3}{2} \sum_{i=1}^{n-3}\log(i+2).
\end{align*}

With $\sum_{i=1}^{n-3}\log(i+2) \in \Theta (n \log n)$, this lies in $\Omega(n \log n)$, proving the theorem.
\end{proof}

    \chapter{Conclusions \& Outlook}
    In this thesis, we analyzed the performance of local search operators on the vertex coloring problem for bipartite graphs. We presented results on the local search fitness landscapes of different bipartite graphs as well as the performance of a gray-box mutation operator on complete bipartite graphs.

During the landscape analysis, we explored several structures in graphs that lead to local search succeeding or failing in finding a global optimum. We found structural features that ensure no non-global local optima exist, which means that local search will find an optimal coloring.

However, we also identified two substructures that lead to the existence of non-global local optima, namely crown graphs and $3$-circles. It follows that local search may fail to find an optimal coloring on a large number of bipartite graphs, which proves that local search in itself cannot be reliably used as a strategy for finding optimal vertex colorings. Further, specifically the non-global local optima caused by crown graphs can be arbitrarily far from the global optimum, meaning that local search does not even reliably offer a good approximation of the optimal coloring.

Regarding the run time of local search, we developed a gray-box mutation operator for local search that works by reducing less-frequent colors with higher probability. We analyzed its performance on complete bipartite graphs and showed that its run time lies in $\Theta(n \log n)$, which is a drastic improvement on the exponential run time of the black-box Random Local Search. 

With our work, we lay important groundwork regarding the capabilities of local search on vertex coloring. We identified limitations caused by graph structures, but also showed that using gray-box mutation causes a drastic improvement in run time on instances where a global optimum can be found.

While our work has shown that local search is not a generally successful strategy for coloring graphs, we believe it may be possible to combine gray-box mutation operators such as the one we designed with strategies for escaping local optima for better results on bipartite graphs. This might either happen in the form of a global search gray-box mutation operator, or by using algorithmic strategies for escaping local optima such as iterated local search or Variable Neighborhood Search. 





    \makeatletter
        \def\toclevel@chapter{-1}
        \def\toclevel@section{0}
    \makeatother

    \pagestyle{plain}

    \renewcommand*{\bibfont}{\small}
    \printbibheading
    \addcontentsline{toc}{chapter}{Bibliography}
    \printbibliography[heading = none]

    \addchap{Declaration of Authorship}
    I hereby declare that this thesis is my own unaided work. All direct or indirect sources used are acknowledged as references.\\[6 ex]

\begin{flushleft}
    Potsdam, \today
    \hspace*{2 em}
    \raisebox{-0.9\baselineskip}
    {
        \begin{tabular}{p{5 cm}}
            \hline
            \centering\footnotesize\printAuthor
        \end{tabular}
    }
\end{flushleft}

\end{document}